  \providecommand\BibTeX{{%
    \normalfont B\kern-0.5em{\scshape i\kern-0.25em b}\kern-0.8em\TeX}}}
\def\S{{\mathcal{S}}}
\def\A{{\mathcal{A}}}
\def\s{{s}}
\def\a{{a}}
\def\trans{{T}}
\def\R{{R}}
\def\T{{T}}
\def\M{{\mathcal{M}}}
\def\aM{{\mathcal{\bar{M}}}}
\def\daM{{\mathcal{\hat{M}}}}
\def\aS{{\mathcal{Z}}}
\def\aA{{\mathcal{\bar{A}}}}
\def\as{{z}}
\def\proj{{Z}}
\def\projparam{{Z_\theta}}
\def\atrans{{\bar{T}}}
\def\atransparam{{\bar{T}_\phi}}
\def\adeltaparam{{\bar{A}_\phi}}
\def\adelta{{\bar{A}_s}}
\def\aR{{\bar{R}}}
\def\aRparam{{\bar{R}_\zeta}}
\def\aa{{\bar{a}}}
\def\daS{{\mathcal{X}}}
\def\daA{{\mathcal{\hat{A}}}}
\def\das{{x}}
\def\daa{{\hat{a}}}
\def\datrans{{\hat{T}_{\phi}}}
\def\predatrans{{\hat{T}_{\phi}'}}
\def\daR{{\hat{R}_{\zeta}}}
\def\tautrans{{\tau}}
\newcommand{\amax}[0]{\underset{a}{\text{ max }}}
\def\hom{{h}}
\def\dapi{{\hat{\pi}}}
\newcommand{\pmin}[1] {
  {\scriptsize $\pm$ #1}}
\definecolor{Gray}{gray}{0.9}
\begin{document}
\title{Plannable Approximations to MDP Homomorphisms: Equivariance under Actions}



\author{Elise van der Pol}
\affiliation{%
	\institution{UvA-Bosch Deltalab, University of Amsterdam}
}
	\email{e.e.vanderpol@uva.nl}
\author{Thomas Kipf}
\affiliation{%
  \institution{University of Amsterdam}
}
  \email{t.n.kipf@uva.nl}
\thanks{TK is now at Google Research, Brain Team}
\author{Frans A. Oliehoek}
\affiliation{%
	\institution{Delft University of Technology}
}
        \email{f.a.oliehoek@tudelft.nl}
\author{Max Welling}
\affiliation{%
	\institution{UvA-Bosch Deltalab, University of Amsterdam}
}
        \email{m.welling@uva.nl}

\begin{abstract}  
This work exploits action equivariance for representation learning in reinforcement learning. Equivariance under actions states that transitions in the input space are mirrored by equivalent transitions in latent space, while the map and transition functions should also commute. We introduce a contrastive loss function that enforces action equivariance on the learned representations. We prove that when our loss is zero, we have a homomorphism of a deterministic Markov Decision Process (MDP). Learning equivariant maps leads to structured latent spaces, allowing us to build a model on which we plan through value iteration. We show experimentally that for deterministic MDPs, the optimal policy in the abstract MDP can be successfully lifted to the original MDP. Moreover, the approach easily adapts to changes in the goal states. Empirically, we show that in such MDPs, we obtain better representations in fewer epochs compared to representation learning approaches using reconstructions, while generalizing better to new goals than model-free approaches.

\end{abstract}

%

\keywords{MDPs; Equivariance; Planning; MDP Homomorphisms}  

\maketitle


\section{Introduction}
Dealing with high dimensional state spaces and unknown environmental dynamics presents an open problem in decision making~\cite{henderson2017deep}. Classical dynamic programming approaches require knowledge of environmental dynamics and low dimensional, tabular state spaces~\cite{puterman1994markov}. Recent deep reinforcement learning methods on the other hand offer good performance, but often at the cost of being unstable and sample-hungry~\cite{henderson2017deep, mnih2016asynchronous, jaderberg2016reinforcement, mnih2015human}. The deep model-based reinforcement learning literature aims to fill this gap, for example by finding policies after learning models based on input reconstruction~\cite{kurutach2018model,ha2018world, zhang2018solar, corneil2018efficient}, by using environmental models in auxiliary losses~\cite{farquhar2018treeqn, jaderberg2016reinforcement}, or by forcing network architectures to resemble planning algorithms~\cite{tamar2016value, oh2017value}. While effective in learning end-to-end policies, these types of approaches are not forced to learn good representations and may thus not build proper environmental models. In this work, we focus on learning representations of the world that are suitable for exact planning methods. To combine dynamic programming with the representational power of deep networks, we factorize the online decision-making problem into a self-supervised model learning stage and a dynamic programming stage. 
\begin{wrapfigure}{r}{0.5\columnwidth}
    \includegraphics[width=0.5\columnwidth]{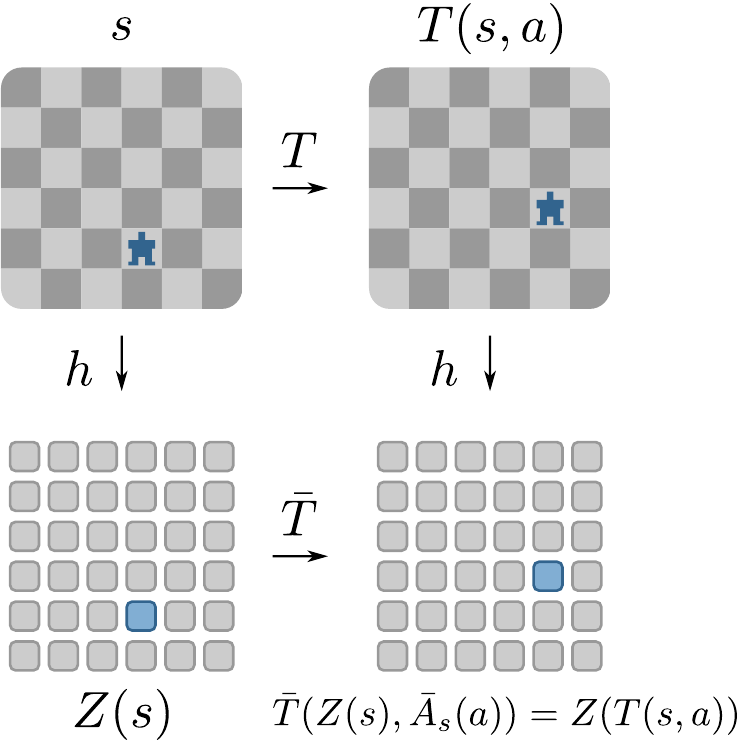}
     \caption{Visualization of the notion of equivariance under actions. We say $\proj$ is an action equivariant mapping if $\proj(\trans(\s, \a)) = \atrans(\proj(\s), \adelta(a))$.}
    \label{fig:action_equivariance}
\end{wrapfigure} We do this under the assumption that good representations minimize MDP metrics~\cite{givan2003equivalence, ferns2004metrics, li2006towards, taylor2009bounding}. While such metrics have desirable theoretical guarantees, they require an enumerable state space and knowledge of the environmental dynamics, and are thus not usable in many problems. To resolve this issue, we propose to learn representations using the more flexible notion of action equivariant mappings, where the effects of actions in input space are matched by equivalent action effects in the latent space. See Figure~\ref{fig:action_equivariance}. We make the following contributions. First, we propose learning an equivariant map and corresponding action embeddings. This corresponds to using MDP homomorphism~\cite{ravindran2004approximate} metrics~\cite{taylor2009bounding} of deterministic MDPs, enabling planning in the homomorphic image of the original MDP. Second, we prove that for deterministic MDPs, when our loss is zero, we have an MDP homomorphism. This means that the resulting policy can be lifted to the original MDP. 
Third, we provide experimental evaluation in a variety of settings to show 1) that we can recover the graph structure of the input MDP, 2) that planning in this abstract space results in good policies for the original space, 3) that we can change to arbitrary new goal states without further gradient descent updates and 4) that this works even when the input states are continuous, or when generalizing to new instances with the same dynamics. 


\section{Background}
\paragraph{Markov Decision Processes} An infinite horizon Markov Decision Process (MDP) is a tuple $\M = (\S, \A, \R, \T, \gamma)$, where $\s \in \S$ is a Markov state, $a \in \A$ is an action that an agent can take, $\R: \S\times\A\rightarrow \mathbb{R}$ is a reward function that returns a scalar signal $r$ defining the desirability of some observed transition, $0 \leq \gamma \leq 1$
is a discount factor that discounts future rewards exponentially and $\T: \S \times \A \times \S \rightarrow [0, 1]$ is a transition function, that for a pair of states and an action assigns a probability of transitioning from the first to the second state. The goal of an agent in an MDP is to find a policy $\pi: \S \times \A \rightarrow [0, 1]$, a function assigning probabilities to actions in states, that maximizes the \textit{return} $G_t = \sum_{k=0}^\infty \gamma^k r_{t+k+1}$. The expected return of a state, action pair under a policy $\pi$ is given by a Q-value function $Q_\pi: \S \times \A \rightarrow \mathbb{R}$ where $Q_\pi(s, a) = \mathbb{E}_\pi \left [ G_t | s_t=s, a_t=a \right ]$. The value of a state under an optimal policy $\pi^*$ is given by the value function $V^*: \S \rightarrow \mathbb{R}$, defined as $V^* = \max_a Q^*(s, a)$ under the Bellman optimality equation. 
\paragraph{Value Iteration} Value Iteration (VI) is a dynamic programming algorithm that finds Q-values in MDPs, by iteratively applying the Bellman optimality operator. This can be viewed as a graph diffusion where each state is a vertex and transition probabilities define weighted edges. VI is guaranteed to find the optimal policy in an MDP. For more details, see~\cite{puterman1994markov}.

\paragraph{Bisimulation Metrics}
To enable computing optimal policies in MDPs with very large or continuous state spaces, one approach is aggregating states based on their similarity in terms of environmental dynamics~\cite{dean1997model, li2006towards}. A key concept is the notion of \textit{stochastic bisimulations} for MDPs, which was first introduced by~\citet{dean1997model}. Stochastic bisimulation defines an equivalence relation on MDP states based on matching reward and transition functions, allowing states to be compared to each other. Later work~\cite{ferns2004metrics} observes that the notion of stochastic bisimulation is too stringent (everything must match exactly) and proposes using a more general \textit{bisimulation metric} instead, with the general form
\begin{align}
    d(s, s') &= \amax \Big (c_R |\R(s, a) - \R(s', a)| + c_T d_P(\T(s, a), \T(s', a)) \Big )
\end{align}
where $c_R$ and $c_T$ are weighting constants, $\T(\cdot, a)$ is a distribution over next states and $d_P$ is some probability metric, such as the Kantorovich (Wasserstein) metric. Such probability metrics are recursively computed. For more details, see~\citep{ferns2004metrics}. The bisimulation metric provides a distance between states that is not based on input features but on environmental dynamics.

\paragraph{MDP Homomorphism}
A generalization of the mapping induced by bisimulations is the notion of MDP homomorphisms~\cite{ravindran2004approximate}. MDP homomorphisms were introduced by~\cite{ravindran2001symmetries} as an extension of~\cite{dean1997model}. An MDP homomorphism $\hom$ is a tuple of functions $\left \langle \proj, \left \{ \adelta \right \} \right \rangle$ with $\proj: \S \rightarrow \aS$ a function that maps states to abstract states, and each $\adelta: \A \rightarrow \aA$ a state-dependent function that maps actions to abstract actions, that preserves the structure of the input MDP. We use the definition given by~\citet{ravindran2004approximate}: 
\begin{definition}[Stochastic MDP Homomorphism]A \emph{Stochastic MDP homomorphism }from
a stochastic MDP $\M=\left\langle \S, \A, \trans, \R\right\rangle $
to an MDP $\aM=\left\langle \aS, \aA, \atrans, \aR\right\rangle $
is a tuple $\hom=\left\langle \proj,\left\{ \adelta \right\} \right\rangle $
, with
\begin{itemize}
\item $\proj: \S\to\aS$ the state embedding function, and
\item $\adelta: \A\to\aA$ the action embedding functions,
\end{itemize}
such that the following identities hold:
\begin{align}
	\forall_{\s,\s'\in\S,\a\in\A}\quad \atrans(\proj(\s')|\proj(\s), \adelta(a)) &= \sum_{\s''\in[\s']_{\proj}}\trans(\s''|\s,\a) \label{eq:hom_trans} \\
	\forall_{\s\in\S,\a\in\A}\quad \aR(\proj(\s),\adelta(\a)) &= \R(\s,\a) \label{eq:hom_rew}
\end{align}
Here, $[\s']_{\proj}=\proj^{-1}(\proj(\s'))$ is the equivalence class of $\s'$
under $Z$.\end{definition}
%

We specifically consider deterministic MDPs. In that case:
\begin{definition}[Deterministic MDP Homomorphism]A \emph{Deterministic MDP homomorphism }from
a deterministic MDP $\M=\left\langle \S, \A,\T,\R\right\rangle $
to an MDP $\aM=\left\langle \aS, \aA, \atrans, \aR\right\rangle $
is a tuple $\hom=\left\langle \proj,\left\{ \adelta \right\} \right\rangle $
, with
\begin{itemize}
\item $\proj:\S\to\aS$ the state embedding function, and
\item $\adelta:\A\to\aA$ the action embedding functions,
\end{itemize}
such that the following identities hold:

\begin{align}
	\forall_{\s,\s'\in\S,\a\in\A}\quad \trans(\s,\a)=\s'\; \implies\; &\atrans(\proj(\s),\adelta(\a))=\proj(\s')  \label{eq:det_hom_trans} \\
	\forall_{\s\in\S,\a\in\A}\quad &\aR(\proj(\s),\adelta(\a))=\R(\s,\a) \label{eq:det_hom_rew}
\end{align}
\end{definition}
The states $\s$ are organized into equivalence classes under $\proj$ if they follow the same dynamics in $\as$-space. 
The MDP $\aM$ is referred to as the \textit{homomorphic image} of $\M$ under $h$~\cite{ravindran2004approximate}. An important property of MDP homomorphisms is that a policy optimal in homomorphic image $\aM$ can be \textit{lifted} to an optimal policy in $\M$~\cite{ravindran2004approximate, hartmanis1966algebraic}.  Looking at these definitions, it may be clear that MDP homomorphisms and bisimulation metrics are closely related. The difference is that the latter measures distances between two MDP states, while the former is a map from one MDP to another. However, the idea of forming a distance metric by taking a sum of the distances can be extended to homomorphisms, as proposed by~\citet{taylor2009bounding}:
\begin{multline}
    d((s, a), (\proj(s), \adelta(a) )) = 
        c_R |\R(s, a) - \aR( \proj(s), \adelta(a) )| 
        \\
        + c_T d_P (\proj \T(s, a), \atrans(\proj(s), \adelta(a))   ), 
\end{multline}
with $d_P$ a suitable measure of the difference between distributions (e.g., Kantorovich metric), and $\proj \T(s, a)$ shorthand for projecting the distribution over next states into the space of $\aS$ (see \cite{gelada2019deepmdp} for details).
We refer to this as the \emph{MDP homomorphism metric}. 
\paragraph{Action-Equivariance}
We define a mapping $Z:\S \rightarrow \aS$ to be action-equivariant if $Z(T(s, a)) = \bar{T}(Z(s), \bar{A}_s(a))$ and $R(s, a) = \bar{R}(Z(s), \bar{A}_s(a))$, i.e. when the constraints in Eq.~\ref{eq:det_hom_trans} and Eq.~\ref{eq:det_hom_rew} hold. \\

\section{Learning MDP Homomorphisms}
We are interested in learning compact, plannable representations of MDPs. We call MDP representations \textit{plannable} if the optimal policy found by planning algorithms such as VI can be lifted to the original MDP and still be close to optimal. This is the case when the representation respects the original MDP's dynamics, such as when the equivariance constraints in Eq.~\ref{eq:det_hom_trans} and Eq.~\ref{eq:det_hom_rew} hold. 
In this paper we leverage MDP homomorphism metrics to find such representations. In particular, we introduce a loss function that enforces these equivariance constraints, then construct an abstract MDP in the learned representation space. We compute a policy in the abstract MDP $\aM$ using VI, and \textit{lift} the abstract policy to the original space. To keep things simple, we focus on deterministic MDPs, but in preliminary experiments our method performed well out of the box on stochastic MDPs. Additionally, the framework we outline here can be extended to the stochastic case, as~\citet{gelada2019deepmdp} does for bisimulation metrics. 

\subsection{Learning State Representations}
Here we show how to learn state representations that respect action-equivariance. We embed the states in $\S$ into Euclidean space using a contrastive loss based on MDP homomorphism metrics. Similar losses have often been used in related work~\cite{gelada2019deepmdp, cswm, anand2019unsupervised, oord2018representation, franccois2018combined}, which we compare in Section~\ref{section:related}. 
We represent the mapping $\proj$ using a neural network parameterized by $\theta$, whose output will be denoted $\projparam$. This function maps a state $s \in \S$ to a latent representation $\as \in \aS \subseteq \mathbb{R}^D$. We additionally approximate the abstract transition $\atrans$ by a function $\atransparam:\aS \times \aA \rightarrow \aS$ parameterized by $\phi$, and the abstract rewards $\aR$ by a neural network $\aRparam: \aS \rightarrow \mathbb{R}$, parameterized by $\zeta$, that predicts the reward for an abstract state. 
From Eq.~\ref{eq:det_hom_rew} we simplify to a state-dependent reward using $\R(\s) = \aR \left ( \proj(s) \right)  \label{eq:action_eq_rew}$ where $\R(s)$ is the reward function that outputs a scalar value for an $\s \in \S$, and $\aR$ is its equivalent in $\aM$.
During training,  we first sample a set of \textit{experience tuples} $\mathcal{D} = \{(s_t, a_t, r_t, s_{t+1}) \}^N_{n=1}$ by rolling out an exploration policy $\pi_e$ for $K$ trajectories. To learn representations that respect Eq.~\ref{eq:det_hom_trans} and~\ref{eq:det_hom_rew}, we minimize the distance between the result of transitioning in observation space, and then mapping to $\aS$, or first mapping to $\aS$ and then transitioning in latent space (see Figure~\ref{fig:action_equivariance}). Additionally, the distance between the observed reward $\R(s)$ and the predicted reward $\aRparam(\projparam(s))$ is minimized. We thus include a general reward loss term. We write $s'_n = \trans(s_n, a_n)$, $z_n = \projparam(s_n)$, and minimize
\begin{align}  
	\mathcal{L}(\theta, \phi, \zeta) = \frac{1}{N} \sum^N_{n=1} \bigg [ &d \left (\projparam(s'_n), \atransparam(z_n, \adeltaparam(z_n, a_n)) \right) \nonumber \\
	+ &d \left (\R(s_n), \aRparam(z_n) \right) \bigg ] \label{eq:basic_loss}
\end{align}
by randomly sampling batches of experience tuples from $\mathcal{D}$. In this paper, we use $d(\as, \as') = \frac{1}{2} (\as - \as')^2$ to model distances in $\aS \subseteq \mathbb{R}^D$. 
Here, $\atransparam$ is a function that maps a point in latent space $\as \in \aS$ to a new state $\as' \in \aS$ by predicting an \textit{action-effect} that acts on $\as$. 
We adopt earlier approaches of letting $\atransparam$ be of the form $\atransparam(\as, \aa)= \as + \adeltaparam(\as, a)$, where $\adeltaparam(\as, a)$ is a simple feedforward network~\cite{cswm, franccois2018combined}. Thus $\adeltaparam: \aS \times \A \rightarrow \aA$ is a function mapping from the original action space to an abstract action space, and $\adeltaparam(\as, a)$ approximates $\bar{A}_s(a)$ (Eq.~$\ref{eq:det_hom_trans}$). The resulting transition loss is a variant of the loss proposed in~\cite{cswm}.
The function $\aRparam: \aS\rightarrow \mathbb{R}$ predicts the reward from $\as$. Since $\proj$, $\atrans$ and $\aR$ are neural networks optimized with SGD, Eq.~\ref{eq:basic_loss} has a trivial solution where all states are mapped to the same point, especially in the sparse reward case. When the reward function is informative, minimizing Eq.~\ref{eq:basic_loss} can suffice, as is empirically demonstrated in~\cite{gelada2019deepmdp}. However, when rewards are sparse, the representations may collapse to the trivial embedding, and for more complex tasks~\cite{gelada2019deepmdp} requires a pixel reconstruction term. In practice, earlier works use a variety of solutions to prevent the trivial map. Approaches based on pixel reconstructions are common~\cite{watter2015embed, watters2019cobra, corneil2018efficient, kurutach2018learning, hafner2018learning, kaiser2019model, ha2018world, zhang2018solar, gelada2019deepmdp}, but there are also approaches based on self-supervision that use alternatives to reconstruction of input states~\cite{anand2019unsupervised, cswm, franccois2018combined, oord2018representation, agrawal2016learning, aytar2018playing, zhang2018composable}. 

To prevent trivial solutions, we use a contrastive loss, maximizing the distance between the latent next state and the embeddings of a set of random other states, $ \mathcal{S}_\neg = \{s_j\}^J_{j=1}$ sampled from the same trajectory on every epoch. Thus, the complete loss is
\begin{align}  
	\mathcal{L}(\theta, \phi, \zeta) = \frac{1}{N} \sum^N_{n=1} \bigg [ &d \left (\projparam(s'_n), \atransparam(z_n, \adeltaparam(z_n, a_n)) \right) \nonumber \\
    + &d \left (\R(s_n), \aRparam(z_n) \right) \nonumber \\
	+ \sum_{s_\neg \in \mathcal{S}_\neg} &d_\neg \left ( \projparam(s_\neg), \atransparam(z_n, \adeltaparam(z_n, a_n)) \right ) \bigg ] \label{eq:actual_loss}
\end{align}
where $d_\neg$ is a negative distance function. Similar to ~\cite{cswm}, we use the hinge loss $d_\neg(\as, \as') = \text{max}(0, \epsilon - d(\as, \as'))$ to prevent the negative distance from growing indefinitely. Here, $\epsilon$ is a parameter that controls the scale of the embeddings.
To limit the scope of this paper, we consider domains where we can find a reasonable data set of transitions without considering exploration. Changing the sampling policy will introduce bias in the data set, influencing the representations. Here we evaluate if we can find plannable MDP homomorphisms and leave the exploration problem to future work. 
\subsection{Constructing the Abstract MDP}
\begin{figure}
    \centering
    \includegraphics[width=0.9\columnwidth]{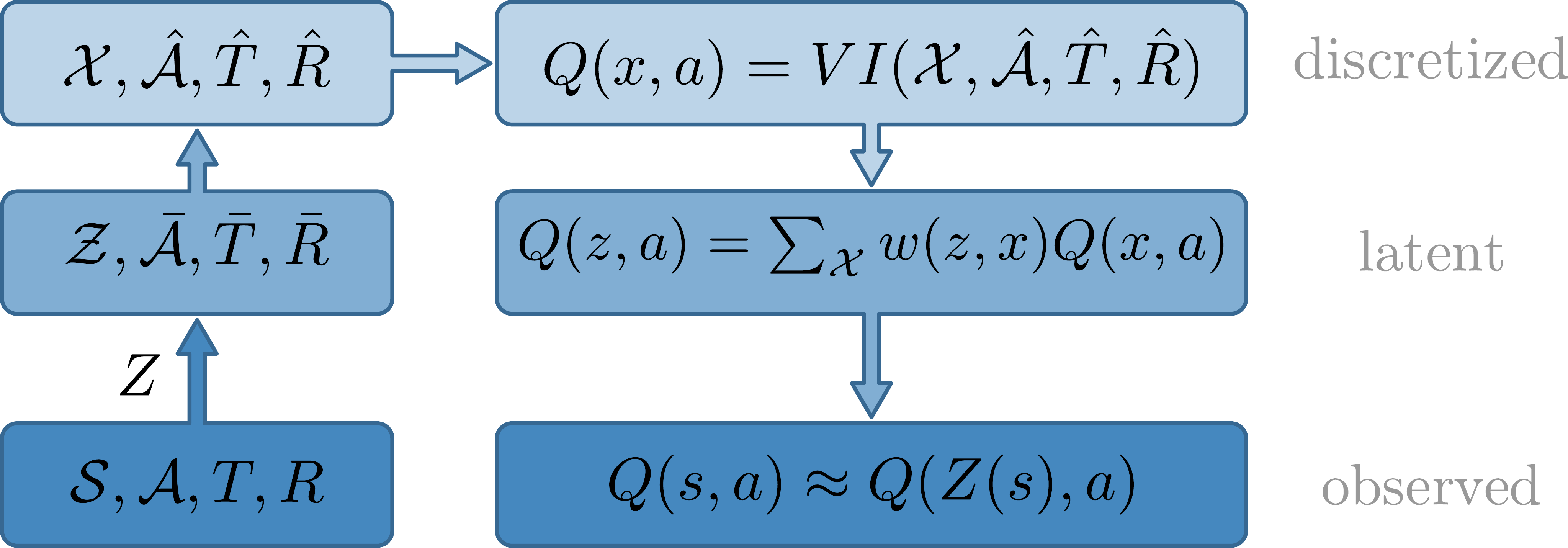}
    \caption{Schematic overview of our method. We learn the map $\proj$ from $\S$ to $\aS$ and discretize $\aS$ to obtain $\daS$. We plan in $\daS$ and use interpolated Q-values to obtain a policy in $\S$.}
    \label{fig:overview}
\end{figure}
After learning a structured latent space, we find abstract MDP $\aM$ by constructing reward and transition functions from $\projparam$, $\atransparam$, $\aRparam$.
\subsubsection{Abstract States}
Core to our approach is the idea that exploiting action-equivariance constraints leads to nicely structured abstract spaces that can be planned in. Of course the space $\aS$ is still continuous, which requires either more complex planning methods, or state discretization. In this paper we aim for the latter, simpler, option, by
constructing a discrete set $\daS$ of (`prototype') latent states in $\aS$ over which we can perform standard dynamic programming techniques. We will denote such prototype states as $\das \in \daS$, cf. Figure~\ref{fig:overview}.
Of course, we then also need to construct discrete transition $\datrans$ and reward $\daR$ functions. The next sub-sections will outline methods to obtain these from $\aS$, $\atransparam$ and $\aRparam$.
To find a `plannable' set of states, the abstract state space should be sufficiently covered. To construct the set, we sample $L$ states from the replay memory and encode them, i.e. $\daS = \{\projparam(\s_l)|\s_l \sim \mathcal{D}\}^L_{l=1}$, pruning duplicates.

\subsubsection{Reward Function}
In Eq.~\ref{eq:actual_loss} we use a reward prediction loss to encourage the latent states to contain information about the rewards. This helps separate distinct states with comparable transition functions. During planning, we can use this predicted reward $\aRparam$. When the reward depends on a changing goal state, such as in the goal-conditioned tasks in Section~\ref{section:experiments}, $\daR (\projparam(s)) = 1$ if $\projparam(s) = \projparam(s_g)$ and $0$ otherwise. We use this reward function in planning, i.e. $\daR(\das) = 1$ if $\das = \projparam(\s_g)$ and $0$ otherwise.
\subsubsection{Transition Function}
	We model the transitions on the basis of similarity in the abstract space. We follow earlier work~\cite{garcia2017few, kipf2018neural} and assume that if two states are connected by an action in the state space, they should be close after applying the latent action transition. The transition function is a distribution over next latent states. Therefore, we use a temperature softmax to model transition probabilities between representations of abstract states in $\daS$:
 \begin{align}
	 \predatrans(z_j|z_i, \alpha) &= \frac{e ^{-d(\as_j, \as_i + \adeltaparam(\as_i, \alpha) )/\tautrans } }{\sum_{k \in \daS} e^{-d(\as_k, \as_i + \adeltaparam(\as_i, \alpha))/\tautrans }}
 \end{align}
	Thus, for the transitions between abstract states:
\begin{align}
	\datrans(x=j|x'=i, \daa=\alpha) &= \predatrans(z_j|z_i, \alpha)
 \end{align}
 where $\tautrans$ is a temperature parameter that determines how `soft' the edges are, and $z_j$ is the representation of abstract state $j$.
Intuitively, this means that if an action moves two states closer together, the weight of their connection increases, and if it moves two states away from each other, the weight of their connection decreases. For very small $\tautrans$, the transitions are deterministic.

\subsubsection{Convergence to an MDP homomorphism}
We now show that when combining optimization of our proposed loss fuction \eqref{eq:actual_loss} with the construction of an abstract MDP as detailed in this subsection, we can approximate an MDP homomorphism. Specifically, for deterministic MDPs, we show that when the loss function in Eq.~\ref{eq:actual_loss} reaches zero, we have an MDP homomorphism of $\M$.
\begin{theorem}
	In a deterministic MDP $\M$, assuming a training set that contains all state, action pairs, and an exhaustively sampled set of abstract states $\daS$ we consider a sequence of losses in a successful training run, i.e. the losses converge to 0. In the limit of the loss $\mathcal{L}$ in Eq.~\ref{eq:actual_loss} approaching 0, i.e. $\mathcal{L} \rightarrow 0$ 
        and 
        $0 < \tautrans \ll 1$, $\tautrans \ll \epsilon$, 
        $\hom = (\projparam, \adeltaparam)$ is an MDP homomorphism of $\M$. 
\end{theorem}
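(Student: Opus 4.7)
The plan is to verify the two defining conditions of a deterministic MDP homomorphism (Eq.~\ref{eq:det_hom_trans} and Eq.~\ref{eq:det_hom_rew}) for the pair $(\projparam, \adeltaparam)$ mapping into the constructed abstract MDP $\daM = (\daS, \daA, \datrans, \daR)$. Since the training set covers every $(s, a)$ pair and $\mathcal{L} \to 0$, the three non-negative summands inside Eq.~\ref{eq:actual_loss} must each vanish for every sampled tuple in the limit, and I would treat them in turn.

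First I would handle the reward identity. The reward summand vanishing forces $\R(s) = \aRparam(\projparam(s))$ for every $s$, hence $\daR(\projparam(s)) = \R(s, a)$ under the state-only reward simplification $\R(s, a) = \R(s)$ adopted earlier in Sec.~3.1, establishing Eq.~\ref{eq:det_hom_rew}.

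Next I would establish the transition identity. Vanishing of the transition summand yields, for every $(s, a)$ with $\T(s, a) = s'$, the exact equality $\projparam(s') = \atransparam(\projparam(s), \adeltaparam(\projparam(s), a))$. This is the continuous version of equivariance; what remains is to show that the discrete $\datrans$, defined via the temperature softmax, places all mass on $\projparam(s')$. The hinge-loss summand vanishing means every negative sample $s_\neg$ satisfies $d(\projparam(s_\neg), \atransparam(\projparam(s), \adeltaparam(\projparam(s), a))) \geq \epsilon$. Because $\daS$ is exhaustively sampled from the same data, every prototype $\das \neq \projparam(s')$ is the encoding of some state that serves as a negative for $(s,a)$ in the limit, so it lies at distance $\geq \epsilon$ from the predicted next latent, whereas $\projparam(s')$ lies at distance $0$. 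Plugging these gaps into the softmax with $\tautrans \ll \epsilon$ drives $\predatrans(\projparam(s') \mid \projparam(s), \adeltaparam(\projparam(s), a)) \to 1$ and all other transition weights to $0$, so the induced deterministic abstract transition sends $(\projparam(s), \adeltaparam(\projparam(s), a))$ to $\projparam(s')$, which is exactly Eq.~\ref{eq:det_hom_trans}.

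The main obstacle I anticipate is tightening the contrastive guarantee from the random negatives $\mathcal{S}_\neg$ actually used during training to the \emph{entire} prototype set $\daS$. The hinge term only enforces separation between the predicted next latent and the sampled negatives, so concluding separation from every element of $\daS$ requires combining the exhaustive training-set assumption with the assumption that negatives are re-sampled each epoch, so that as the loss converges every pair of distinct encodings is eventually driven at least $\epsilon$ apart. A secondary subtlety is identifying $\daS$, which is defined as a set of encodings of sampled states, with the codomain of $\projparam$ appearing in the contrastive term; once this identification is made, the hinge-loss bound transfers directly from sampled negatives to arbitrary prototypes, and the softmax-to-argmax limit closes the argument.
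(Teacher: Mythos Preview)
Your proposal is correct and mirrors the paper's own proof: decompose the loss into its three non-negative summands, use the positive term to place $\projparam(s')$ at distance zero from the predicted next latent, use the vanishing hinge term to push every other encoded state to distance at least $\epsilon$, invoke $\tautrans \ll \epsilon$ to collapse the softmax onto $\projparam(s')$, and handle rewards by the same vanishing-distance argument. The obstacle you flag---lifting the hinge guarantee from the randomly sampled negatives $\mathcal{S}_\neg$ to every prototype in $\daS$---is precisely the gap the paper's proof leaves implicit and then acknowledges as a caveat immediately after the proof.
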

\begin{proof}
\sloppy    
	Fix $0 < \tautrans \ll 1$ and write $\as = \projparam(\s)$ and $\aa = \adeltaparam(\as, \a)$. Consider that learning converges, i.e. $\mathcal{L} \rightarrow 0$. This implies that the individual loss terms $d(\atransparam(\as, \aa), \as')$, $d_\neg(\atransparam(\as, \aa), \as_\neg)$ and $d(\R(\s), \aRparam(\as))$ also go to zero for all $(\s, \a, r, \s', s_\neg) \sim \mathcal{D}$. \\
        \textbf{Positive samples:}        
        As the distance for positive samples $d_+ = d(\atransparam(\as, \aa), \as') \rightarrow 0$, then $d_+ \ll \tau$. Since $d_+ \ll \tau$, then $e^{-d_+/\tau} \approx 1$.\\
        \textbf{Negative samples:}        
            Because the negative distance $d_\neg(\atransparam(\as, \aa), \as_\neg) \rightarrow 0$, 
            $d_\neg \leq \epsilon$. 
            This, in turn, implies that the distance 
            to all negative samples $d_- = d(\atransparam(\as, \aa), \as_\neg) \geq \epsilon  $ 
            and thus $\tau \ll \epsilon \leq d_-$, 
            meaning that $1 \ll \frac{d_-}{\tau}$ and thus $e^{-d_-/\tau} \approx 0$.\\
	This means that when the loss approaches 0, $\predatrans(\as'|\as, \aa)=1$ where $\trans(\s'|\s, \a)=1$ and $\predatrans(\as_\neg|\as, \aa)=0$ when $\trans(\s_\neg|\s, \a)=0$. 
Since $\M$ is deterministic, $\trans(\s'|\s, \a)$ transitions to one state with probability 1, and probability 0 for the others. 
Therefore, 
$
	\predatrans(\projparam(\s')| \projparam(\s), \adeltaparam(\projparam(\s), \a)) = \sum_{s'' \in [s']_{\proj}} \trans(\s''|\s, \a) 
$
	and Eq.~\ref{eq:det_hom_trans} holds.
        As the distance for rewards
        $d(\R(\s), \aRparam(\as)) \rightarrow 0$, we have that
	$ \aRparam(\as) = \R(\s) $
	and Eq.~\ref{eq:det_hom_rew} holds.
	Therefore, when the loss reaches zero we have an MDP homomorphism of $\M$.
\end{proof}
Note that Eq.~\ref{eq:actual_loss} will not completely reach zero: negative samples are drawn uniformly. Thus,  a positive sample may occasionally be treated as a negative sample. Refining the negative sampling can further improve this approach.

\subsection{Planning and Acting}
After constructing the abstract MDP we plan with VI~\cite{puterman1994markov} and lift the found policy to the original space by interpolating between Q-value embeddings. 
Given $\daM = (\daS, \daA, \datrans, \daR, \gamma)$, VI finds a policy $\dapi$ that is optimal in $\daM$. For a new state $\s^* \in \S$, we embed it in the representation space $\aS$ as $\as^* = \projparam(\s^*)$ and use a softmax over its distance to each $\das \in \daS$ to interpolate between their Q-values, i.e.
\begin{align}
    Q(\as^*, a) &= \sum_{\das \in \daS} w(\as^*, \das) Q(\das, a) \\
	w(\as^*, x) &= \frac{e^{-d(\as_x, z^*) /\eta } }{\sum_{k \in \daS} e^{-d(\as_k, \as^*)/\eta }} \label{interpol}
\end{align}
where $\eta$ is a temperature parameter that sets the `softness' of the interpolation. We use the interpolated Q-values for greedy action selection for $s^*$, transition to $s^{**}$ and iterate until the episode ends.

\section{Experiments} \label{section:experiments}
Here we show that in simple domains, our approach 1) succeeds at finding plannable MDP homomorphisms for discrete and continuous problems 2) requires less data than model-free approaches, 3) generalizes to new reward functions and data and 4) trains faster than approaches based on reconstructions. We focus on deterministic MDPs. While preliminary results on stochastic domains were promising, an in-depth discussion is beyond the scope of this paper.

\begin{figure*}
	\begin{subfigure}{0.24\textwidth}
	    \centering
	    \includegraphics[width=\textwidth]{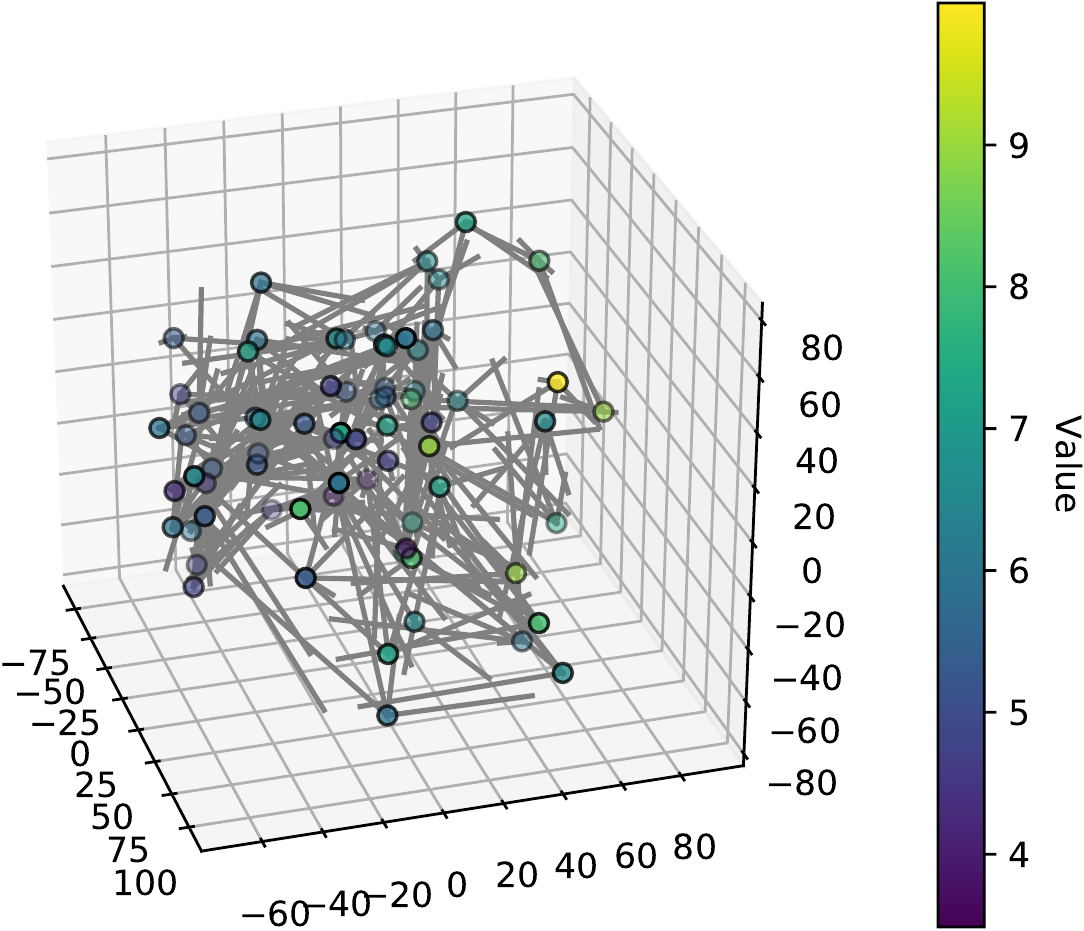}
	    \caption{WM-AE Baseline}
	    \label{fig:room_wm}
	\end{subfigure}
	\begin{subfigure}{0.24\textwidth}
	    \centering
	    \includegraphics[width=\textwidth]{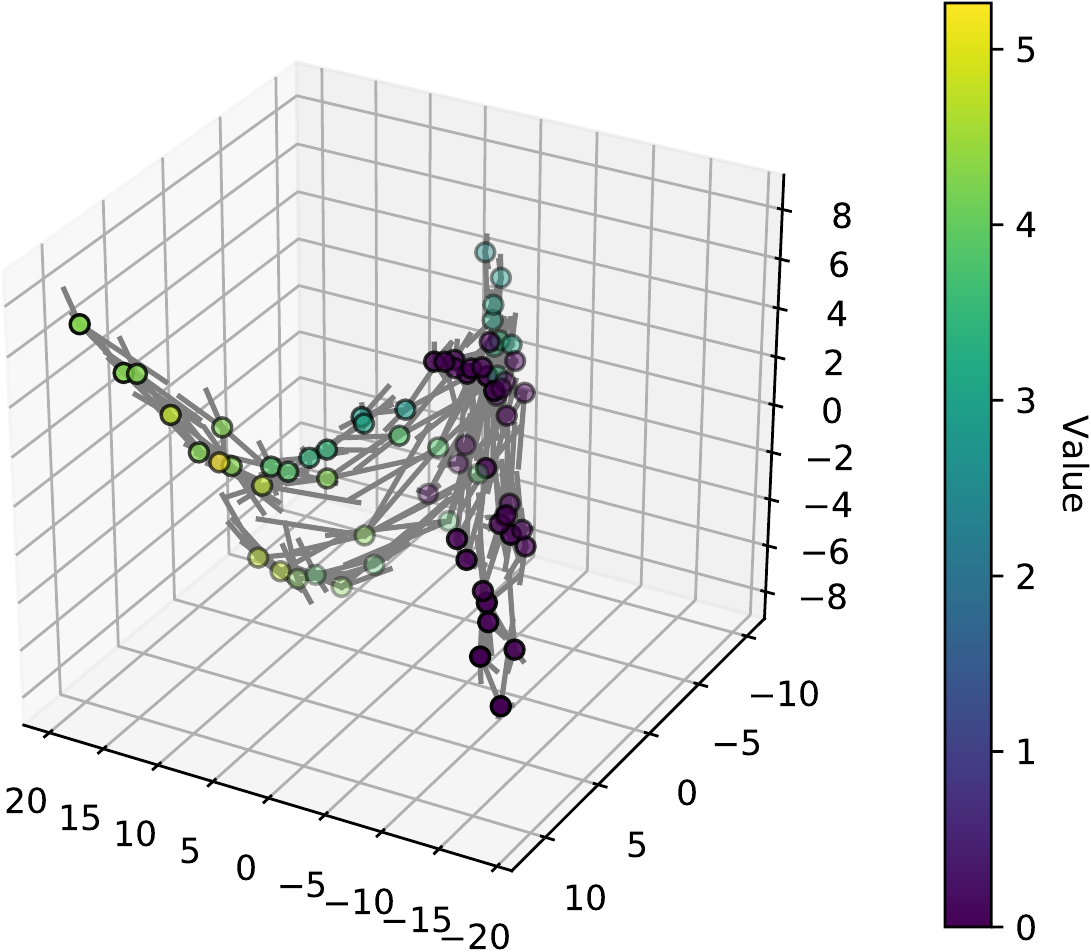}
	    \caption{LD-AE Baseline}
	    \label{fig:room_recon}
	\end{subfigure}
	\begin{subfigure}{0.24\textwidth}
	    \centering
	    \includegraphics[width=\textwidth]{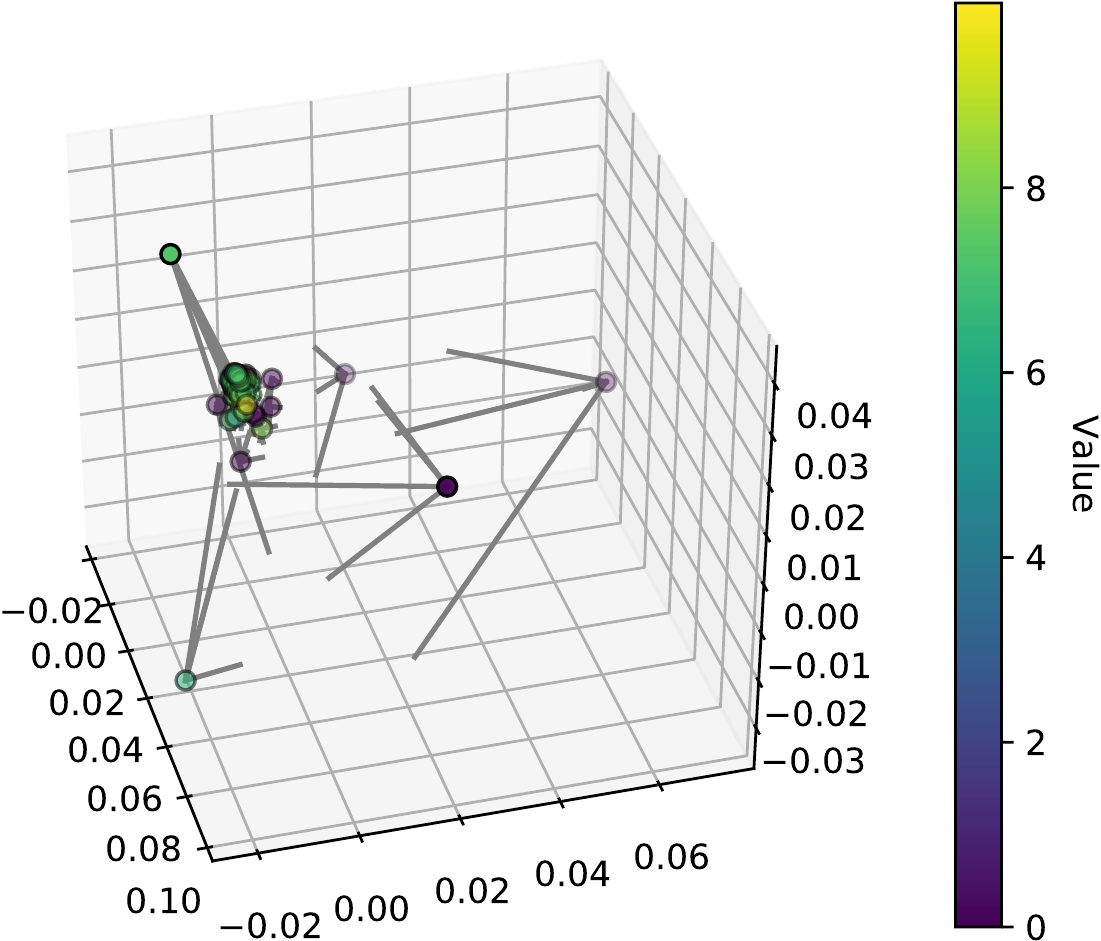}
	    \caption{DMDP-H Baseline}
	    \label{fig:room_aceq0}
	\end{subfigure}
	\begin{subfigure}{0.24\textwidth}
	    \centering
	    \includegraphics[width=\textwidth]{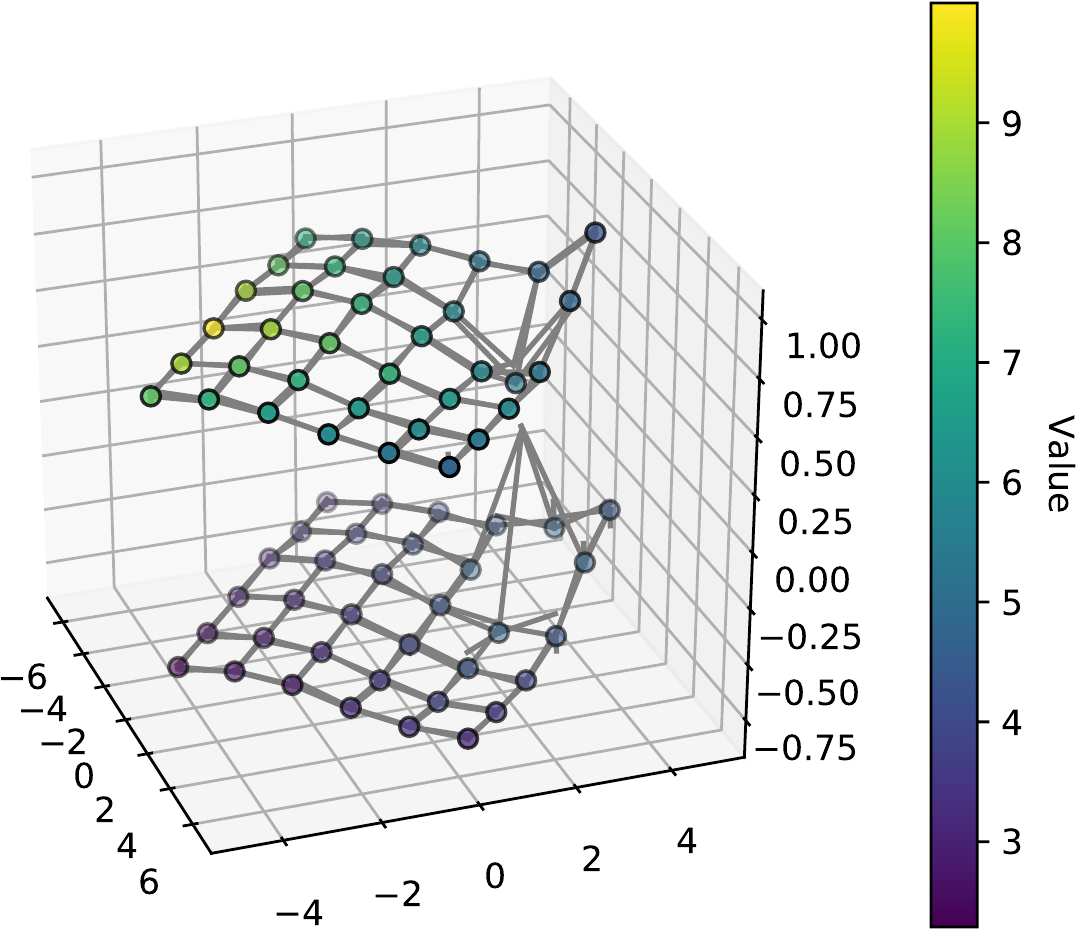}
	    \caption{This paper}
	    \label{fig:room_aceq5}
	\end{subfigure}
	\caption{Abstract MDP for three approaches in the single object room domain. Nodes are PCA projections of abstract states, edges are predicted $\atransparam$, colors are predicted values. }
	\label{fig:room_latents}
\end{figure*}
\subsection{Baselines}
To evaluate our approach, we compare to a number of baselines:
\begin{enumerate}
    \item WM-AE: An auto-encoder approach inspired by World Models~\cite{ha2018world}. We follow their approach of training representations using a reconstruction loss, then learning latent dynamics on fixed representations. We experimented with a VAE~\cite{kingma2013auto}, which did not perform well (see~\cite{cswm} for similar results). We thus use an auto-encoder to learn an embedding, then train an MLP to predict the next state from embedding and action.
    \item LD-AE: An auto-encoder with latent dynamics. We train an auto-encoder to reconstruct the input, and predict the next latent state. We experimented with reconstructing the next state, but this resulted in the model placing the next state embeddings in a different location than the latent transitions.
    \item DMDP-H: We evaluate the effectiveness of training without negative sampling. This is similar to DeepMDP~\cite{gelada2019deepmdp}. However, unlike DeepMDP, DMDP-H uses action-embeddings, for a fairer comparison.
    \item GC-Model-Free: Finally, we compare to a goal-conditioned model-free baseline (REINFORCE with state-value baseline), to contrast our approach with directly optimizing the policy\footnote{Deep reinforcement learning algorithms such as our baseline may fail catastrophically depending on the random seed~\cite{henderson2017deep}. For a fair comparison, we train the baseline on $6$ random seeds, then remove those seeds where the method fails to converge for the train setting.}.  We include the goal state as input for a fair comparison.
\end{enumerate}
\begin{wrapfigure}{r}{0.5\columnwidth}
	\vspace{-3em}
	\includegraphics[width=0.5\columnwidth]{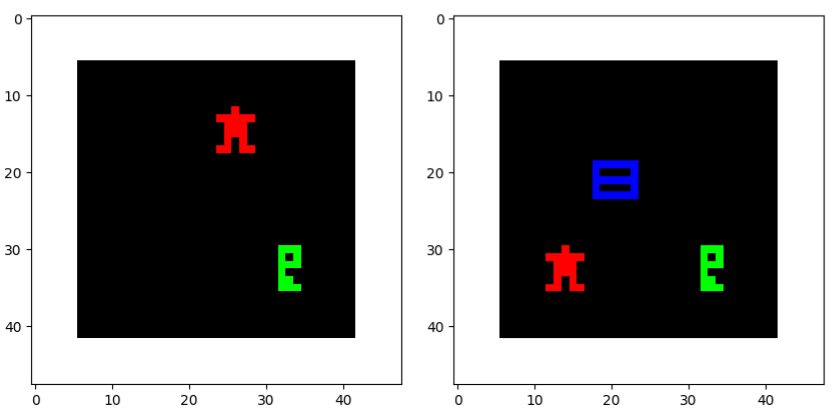}
\caption{Example states in the object collection domain for the single object and double object tasks.}
\label{fig:objects}
\end{wrapfigure}
To fairly compare the planning approaches, we perform a grid search over the softness of the transition function by evaluating performance on the train goals in $\tautrans \in [1, 0.1, 0.001, 0.0001, 0.00001,$
$\mathrm{1e-20}]$. 
Unless otherwise stated, the planning approaches are all trained on datasets of 1000 trajectories, sampled with a random policy. The learning rate is set to 0.001 and we use Adam~\cite{kingma2014adam}. For the hinge loss, we use $\epsilon=1$. The latent dimensionality is set to 50 everywhere. Our approach is trained for 100 epochs. WM-AE is trained for 1000 epochs in total: 500 for the auto-encoder and 500 for the dynamics. LD-AE is trained for 1000 epochs. For constructing the abstract MDP we sample $1024$ states from $\mathcal{D}$, project unto $\aS$ and prune duplicates. For planning we use VI with discount factor $\gamma=0.9$, 500 backups and interpolation parameter (Eq.~\ref{interpol}) $\eta=\mathrm{1e-20}$. The learning rate for the model-free baseline was chosen by fine-tuning on the training goals. For the model-free baseline, we use a learning rate of $\mathrm{5e-4}$ and we train for 500k steps (more than five times the number of samples the planning approaches use). Network $Z_\theta$ has 2 convolutional layers (both 16 channels, $3\times3$ filters) and 3 fully connected layers (input$\rightarrow 64 \rightarrow 32 \rightarrow |z|$). Networks $T_\phi$ and $R_\xi$ each have 2 fully connected layers. We use ReLU non-linearities between layers. 

\subsection{Object Collection}
We test our approach on an object collection task inspired by the key task in~\citep{franccois2018combined}, with major differences: rather than searching for three keys in a labyrinth, the agent is placed in a room with some objects. Its task is to collect the key. On every time step, the agent receives a state---a $3 \times 48 \times 48$ pixel image (a channel per object, including the agent), as shown in Figure~\ref{fig:objects}---and a goal state of the same size. 
At train time, the agent receives reward of $1$ on collection of the \textit{key} object, and a reward of $-1$ if it grabs the wrong object, and a reward of $-0.1$ on every time step. The episode ends if the agent picks up one (or more) of the objects and delivers it to one of the four corners (randomly sampled at episode start), receiving an additional delivery reward of $1$. At test time, the agent is tasked with retrieving one of the objects chosen at random, and delivering to a randomly chosen location, encoded as a desired goal state. This task will evaluate how easily the trained agent adapts to new goals/reward functions. The agent can interact with the environment until it reaches the goal or $100$ steps have passed.  For both tasks, we compare to the model-free baseline. We also compare to the DMDP-H, WD-AE and LD-AE baselines.
We additionally perform a grid search over the hinge, number of state samples for discretization and $\eta$ hyperparameters for insight in how these influence the performance. This showed that our approach is robust with respect to the hinge parameter, but it influences the scale of the embeddings. The results decrease only when using 256 or fewer state samples. Lastly, $\eta$ is robust for values lower than 1. We opt for a low value of $\eta$, to assign most weight to the Q-value of the closest state.

\subsubsection{Single Object Task}
We first evaluate a simple task with only one object (a key). The agent's task is to retrieve the key, and move to one of four delivery locations in the corners of the room. The delivery location is knowledge supplied to the agent in the form of a goal state that places the agent in the correct corner and shows that there is no key. These goal states are also supplied to the baseline, during training and testing. Additionally, we perform an ablation study on the effect of the reward loss.
\begin{table}
\small
\begin{tabular}{lrrrr}
\toprule
	 \multicolumn{1}{c}{Avg. ep. length $\downarrow$} \\
Task & \multicolumn{2}{c}{\textbf{Single Object}} & \multicolumn{2}{c}{\textbf{Double Object}}\\
\emph{Goal Set} & Train & Test & Train & Test \\
\midrule
GC-Model-free & 10.00 \pmin{0.11} & 67.25 \pmin{6.81} & 10.10 \pmin{0.69} & 38.25 \pmin{15.30}  \\
	WM-AE & 12.96  \pmin{8.93} & 10.03 \pmin{5.56}  &  29.61 \pmin{19.42} & 22.53 \pmin{22.12} \\
	LD-AE & 23.46 \pmin{27.10} & 21.04 \pmin{21.71} & 60.26 \pmin{29.14} & 52.72 \pmin{27.32}  \\
	DMDP-H ($J=0$) & 82.88 \pmin{11.62} & 85.69 \pmin{7.98} & 81.24 \pmin{2.45} & 81.17 \pmin{2.69} \\
\rowcolor{Gray}
	Ours, $J=1$, & 8.61 \pmin{0.35} & \textbf{7.53} \pmin{0.24} & 8.53 \pmin{0.36} & \textbf{8.38} \pmin{0.07} \\
\rowcolor{Gray}
	Ours, $J=3$ & 8.68 \pmin{0.27} & 7.63 \pmin{0.19} & 8.61 \pmin{0.38} & 8.95 \pmin{0.63}\\
\rowcolor{Gray}
	Ours, $J=5$ & \textbf{8.57} \pmin{0.48} & 7.74 \pmin{0.22} & \textbf{8.26} \pmin{0.84} & 8.96 \pmin{1.15} \\
\bottomrule
\end{tabular}
	\caption{Comparing average episode length of 100 episodes on the object collection domain. Reporting mean and standard deviation over 5 random seeds for the planning approaches. The model free approach is averaged over 4 random seeds for the single object domain, 3 random seeds for the double object domain.}
\label{tab:grid_world}
\end{table}
The average episode lengths are shown in Table~\ref{tab:grid_world}. Our approach outperforms all baselines, both at train and at test time. There is no clear preference in terms of the number of negative samples --- as long as $J>0$ --- the result for all values of $J$ are quite close together. The DMDP-H approach fails to find a reasonable policy, possibly due to the sparse rewards in this task providing little pull against state collapse. Out of the planning baselines, WM-AE performs best, probably because visually salient features are aligned with decision making features in this task. Finally, the model-free approach is the best performing baseline on the training goals, but does not generalize to test goals. \\
The results of the reward ablation are shown in Table~\ref{tab:reward}. While removing the reward loss does not influence performance much for $J=0$, $J=3$ and $J=5$, when $J=1$ the reward prediction is needed to separate the states. Without the reward, the single negative sample does not provide enough pull for complete separation.
\begin{table} 
\small
\begin{tabular}{lrrrr}
\toprule
	 \multicolumn{1}{c}{Avg. ep. length $\downarrow$} \\
 & \multicolumn{2}{c}{\textbf{Reward Loss}} & \multicolumn{2}{c}{\textbf{No Reward Loss}}\\
\emph{Goal Set} & Train & Test & Train & Test \\
\midrule
	DMDP-H ($J=0$) &  82.88 \pmin{11.62} & 85.69 \pmin{7.98}&  87.03 \pmin{3.08} & 84.08 \pmin{3.02}\\
	Ours, $J=1$ & 8.61 \pmin{0.35} & 7.53 \pmin{0.24} & 74.32 \pmin{19.90} & 68.54 \pmin{17.29}\\
	Ours, $J=3$ & 8.68 \pmin{0.27} & 7.63 \pmin{0.19}& 8.54 \pmin{0.36} & 7.44 \pmin{0.21} \\
	Ours, $J=5$ & 8.57 \pmin{0.48} & 7.74 \pmin{0.22}  & 8.52 \pmin{0.19} & 7.53 \pmin{0.20}   \\
\bottomrule
\end{tabular}
	\caption{Ablation study of the effect of the reward loss. Comparing average episode length of 100 episodes for the single object room domain. Reporting mean and standard deviation over 5 random seeds.}
\label{tab:reward}
\end{table}

\begin{figure}
	\begin{subfigure}[t]{0.8\columnwidth}
	    \centering
	    \includegraphics[width=\columnwidth]{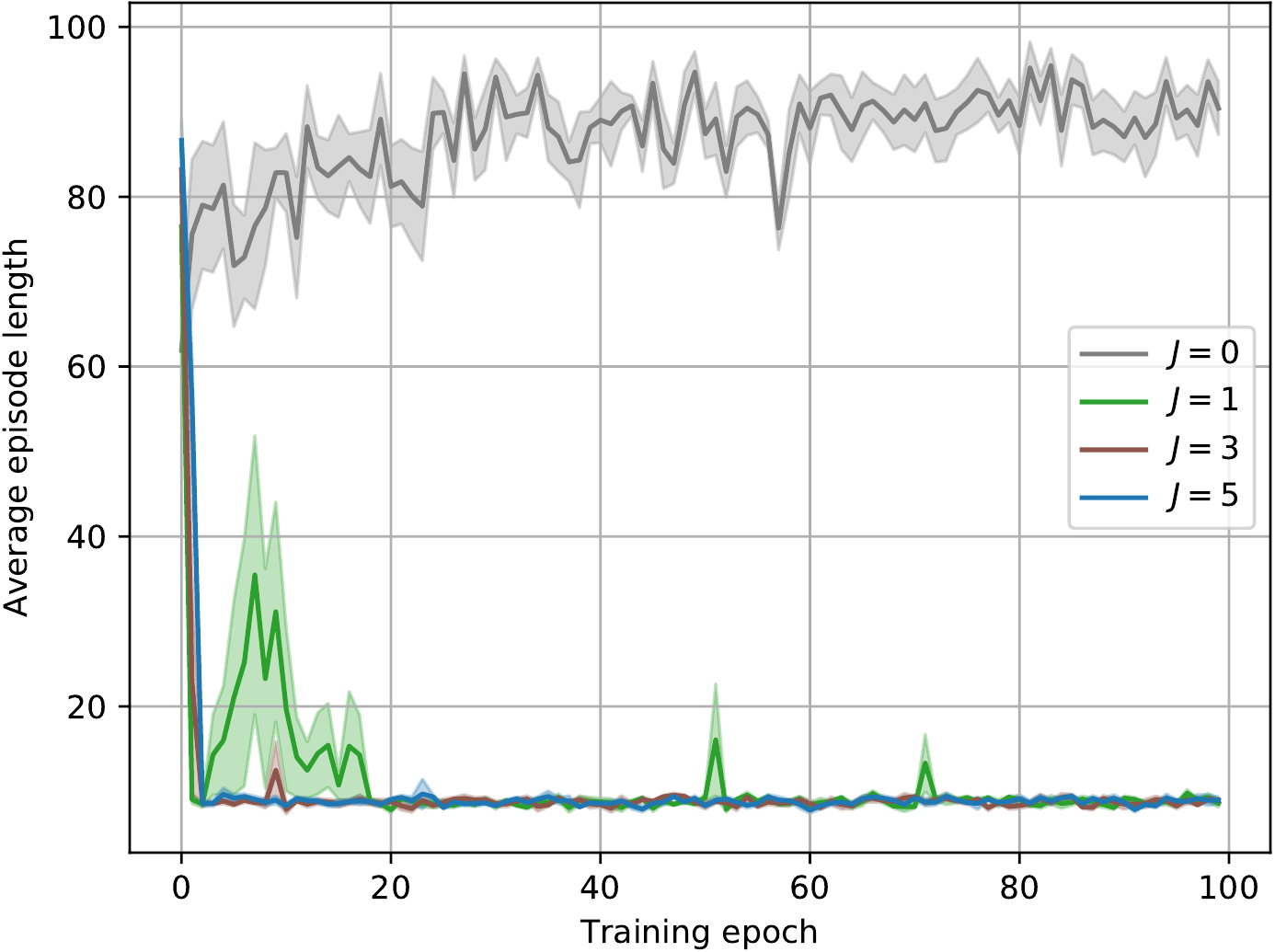}
		\caption{Comparison of different values of $J$.}
	    \label{fig:epochs_aceq}
	\end{subfigure}
	\begin{subfigure}[t]{0.8\columnwidth}
	    \centering
	    \includegraphics[width=\columnwidth]{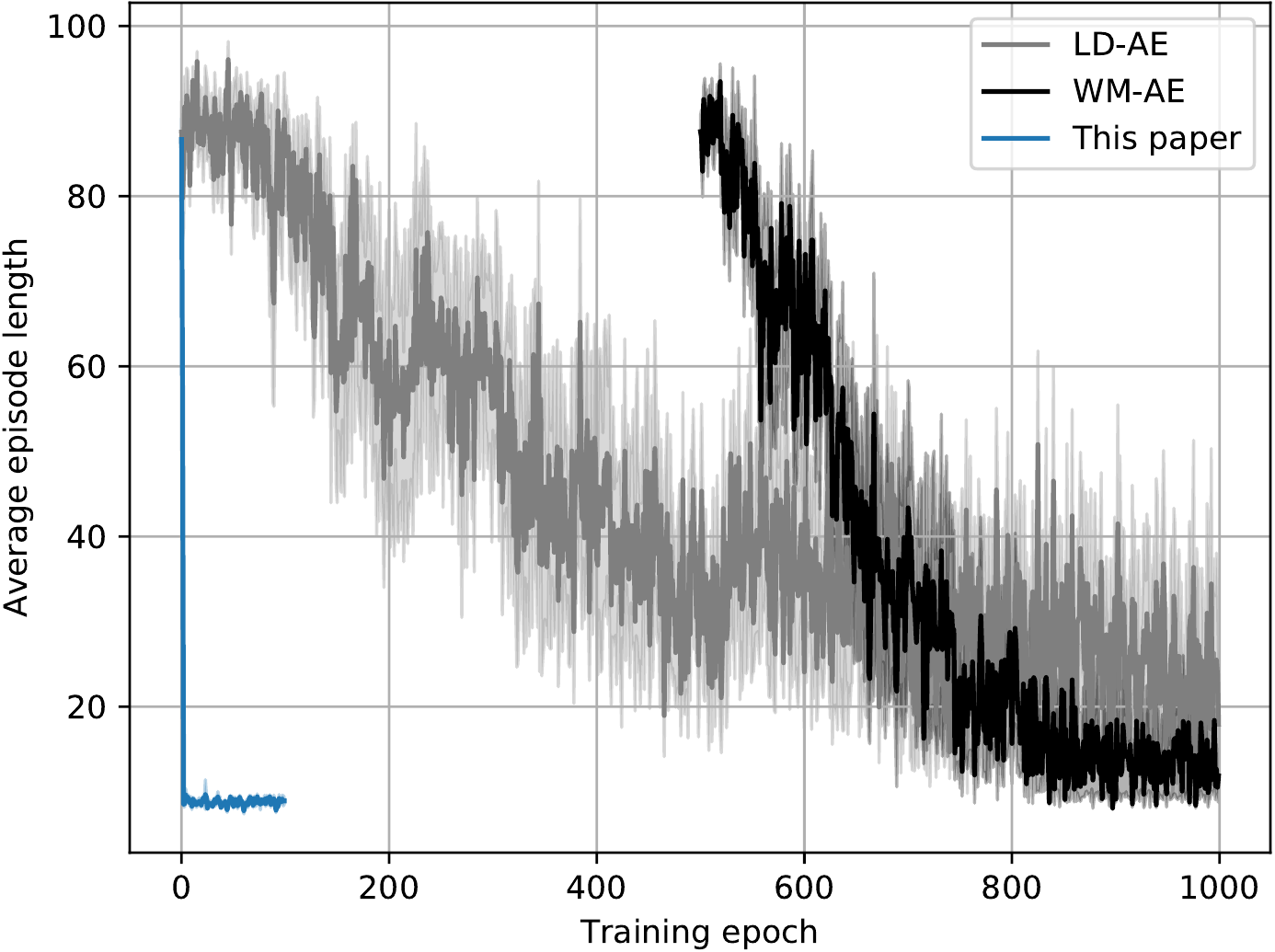}
	    \caption{Comparison of this paper and the WM-AE and LD-AE baselines. WM-AE can not be evaluated until the auto-encoder has finished training and training of the dynamics model begins.}
	    \label{fig:epochs_baseline}
	\end{subfigure}
	\caption{Average episode length per training epoch for the single object domain. Reported mean and standard error over 5 random seeds.  }
	\label{fig:epochs_plot}
\end{figure}
\begin{figure*}

	\begin{subfigure}{0.24\textwidth}
	    \centering
	    \includegraphics[width=\textwidth]{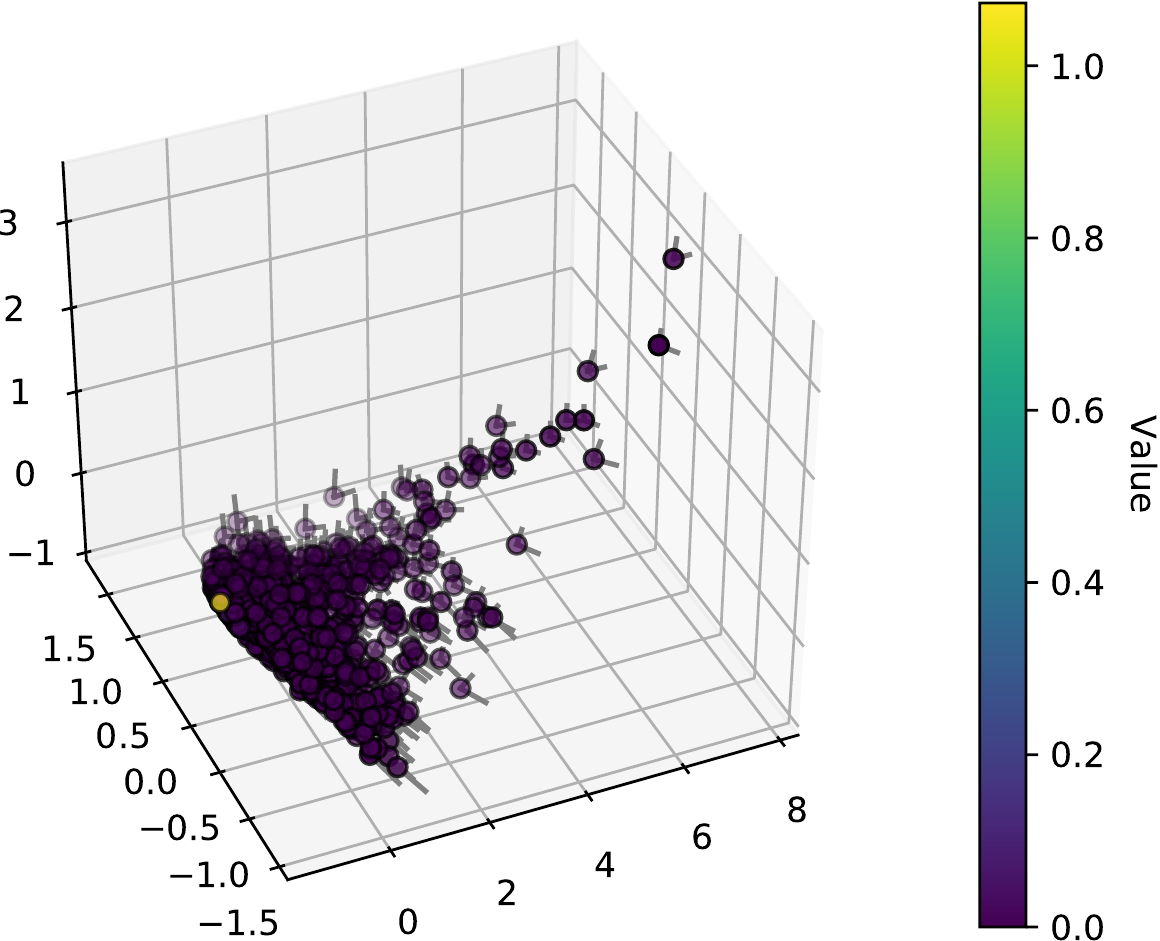}
	    \caption{WM-AE Baseline}
	    \label{fig:cartpole_wm}
	\end{subfigure}
	\begin{subfigure}{0.24\textwidth}
	    \centering
	    \includegraphics[width=\textwidth]{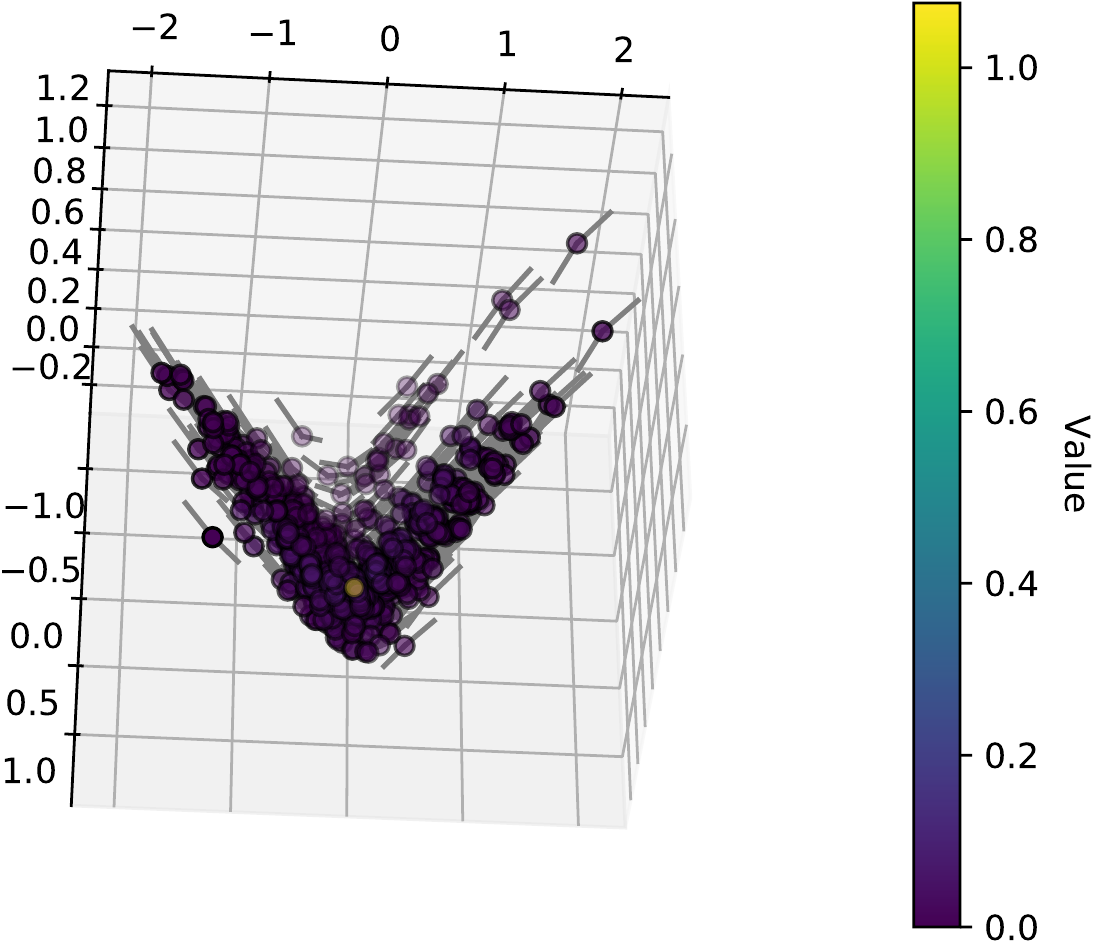}
	    \caption{LD-AE Baseline}
	    \label{fig:cartpole_recon}
	\end{subfigure}
	\begin{subfigure}{0.24\textwidth}
	    \centering
	    \includegraphics[width=\textwidth]{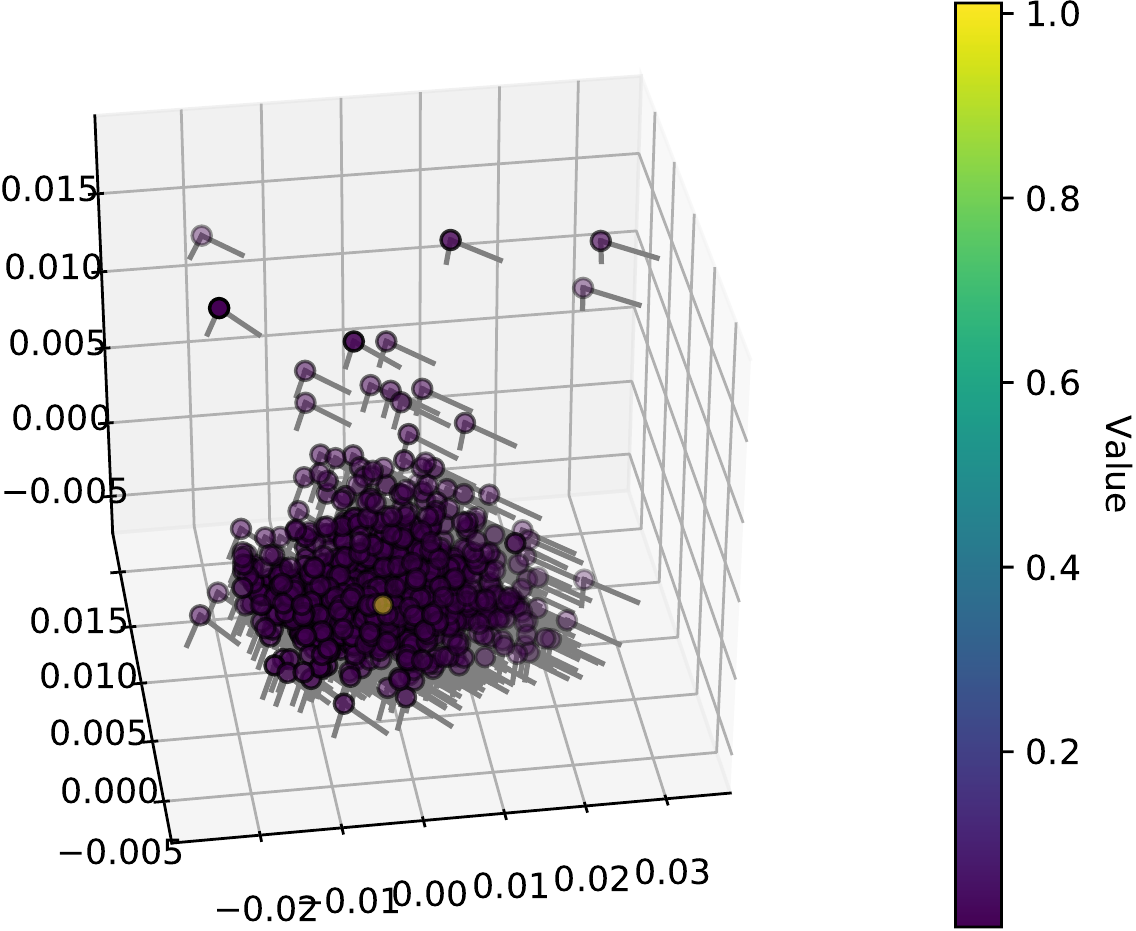}
	    \caption{DMDP-H Baseline}
	    \label{fig:cartpole_aceq0}
	\end{subfigure}
	\begin{subfigure}{0.24\textwidth}
	    \centering
	    \includegraphics[width=\textwidth]{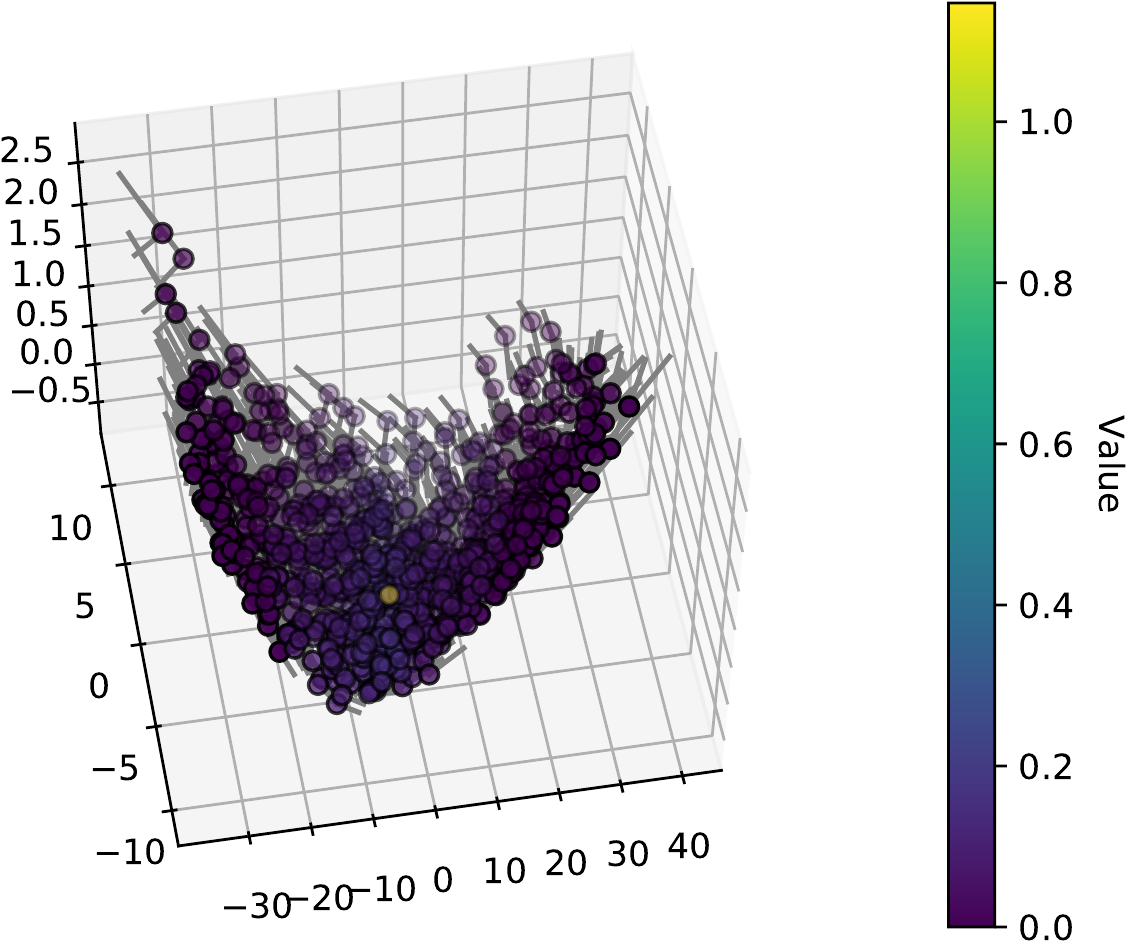}
	    \caption{This paper}
	    \label{fig:cartpole_aceq5}
	\end{subfigure}
	\caption{Abstract MDP for four approaches in CartPole. Nodes are PCA projections of abstract states, edges are predicted $\atransparam$, colors are predicted values. }
	\label{fig:cartpole_latents}
\end{figure*}
We show the latent spaces found for the baselines and our approach in Figure~\ref{fig:room_latents}. Our approach has found a double grid structure - representing the grid world before, and after picking up the key. The baselines are reasonably plannable after training for long enough, but the latent spaces aren't as nicely structured as our approach. This mirrors results in earlier work~\cite{cswm}. Thus, while pixel reconstruction losses may be able to find reasonable representations for certain problems, these rely on arbitrarily complex transition functions. Moreover, due to their need to train a pixel reconstruction loss they take much longer to find useable representations. This is shown in Figure~\ref{fig:epochs_baseline}, where the performance after planning for each training epoch is plotted and compared. Additionally, we observe state collapse for DMDP-H in Figure~\ref{fig:room_aceq0}, and this is reflected in a high average episode length after planning.

\subsubsection{Double Object Task}
We now extend the task to two objects: a key and an envelope. The agent's task at train time is still to retrieve the key. At test time, the agent has to pick up the key or the envelope (randomly chosen) and deliver it to one of the corners. 
We show results in Table~\ref{tab:grid_world}. Again, our method performs well on both train and test set, having clearly learned a useful abstract representation, that generalizes to new goals. The WM-AE baseline again fares better than the LD-AE baseline, and DMDP-H fails to find a plannable representation. The model-free baseline performs slightly worse than our method on this task, even after seeing much more data. Additionally, even though it performs reasonably well on the training goals, it does not generalize to new goals at all. The WM-AE performs worse on this task than our approach, but generalizes much better than the model-free baseline, due to its planning, while the LD-AE baseline does not find plannable representations of this task.  

\subsection{Continuous State Spaces}
We evaluate whether we can use our method to learn plannable representations for continuous state spaces. We use OpenAI's CartPole-v0 environment~\cite{brockman2016openai}. We include again a model-free baseline that is trained until completion as a reference for the performance of a good policy. We also compare DMDP-H, WD-AE and LD-AE. We expect that the latter two would perform well here; after all, the representation that they reconstruct is already quite compact. We additionally evaluate performance when the amount of data is limited to only 100 trajectories (and we limit the number of training epochs for all planning approaches to 100 epochs). 
We plot the found latent space for our approach and the baselines in Figure~\ref{fig:cartpole_latents}. The goal in this problem is to reach the all-zero reward vector, which we set as the goal state with reward 1, and all other states to reward 0. For our approach and both auto-encoder baselines, the latent space forms a bowl with the goal in its center. The DMDP-H again shows a shrunk latent space, and does not have this bowl structure.
Results are shown in Table~\ref{tab:cartpole}. Our approach performs best out of all planning approaches. When trained fully, the model-free approach performs better. However, when we limit the number of environmental interactions to 100 trajectories, we see that the planning approach still finds a reasonable policy, while the model-free approach fails completely. This indicates that our approach is more data efficient.
\begin{table}
\small
	\centering
\begin{tabular}{lrr}
\toprule
	 Average episode length $\uparrow$& \textbf{Standard}  & \textbf{Only 100 trajectories}  \\
\midrule
	GC-Model-free & \textbf{197.85} \pmin{2.16} & 23.84 \pmin{0.88}\\
	WM-AE & 150.61 \pmin{30.48} & 114.47 \pmin{17.32} \\
	LD-AE & 157.10 \pmin{11.14} & 154.73 \pmin{50.49} \\
	DMDP-H ($J=0$) & 39.32 \pmin{9.02} & 72.81 \pmin{20.16} \\
\rowcolor{Gray}
	Ours, $J=1$, & 174.64 \pmin{22.43} & 127.37 \pmin{44.02} \\
\rowcolor{Gray}
	Ours, $J=3$ & 166.05 \pmin{24.73} & 148.30 \pmin{67.27}\\
\rowcolor{Gray}
	Ours, $J=5$ & 186.31 \pmin{12.28} & \textbf{171.53} \pmin{34.18} \\
\bottomrule
\end{tabular}
\caption{CartPole results. Comparing average episode length over 100 episodes, reporting mean and standard deviation over 5 random seeds. The left column has standard settings, in the right column only 100 trajectories are encountered, and planning models are trained for only 100 epochs.}
\label{tab:cartpole}
\end{table}
\begin{figure*}
	\begin{subfigure}{0.24\textwidth}
	    \centering
	    \includegraphics[width=\textwidth]{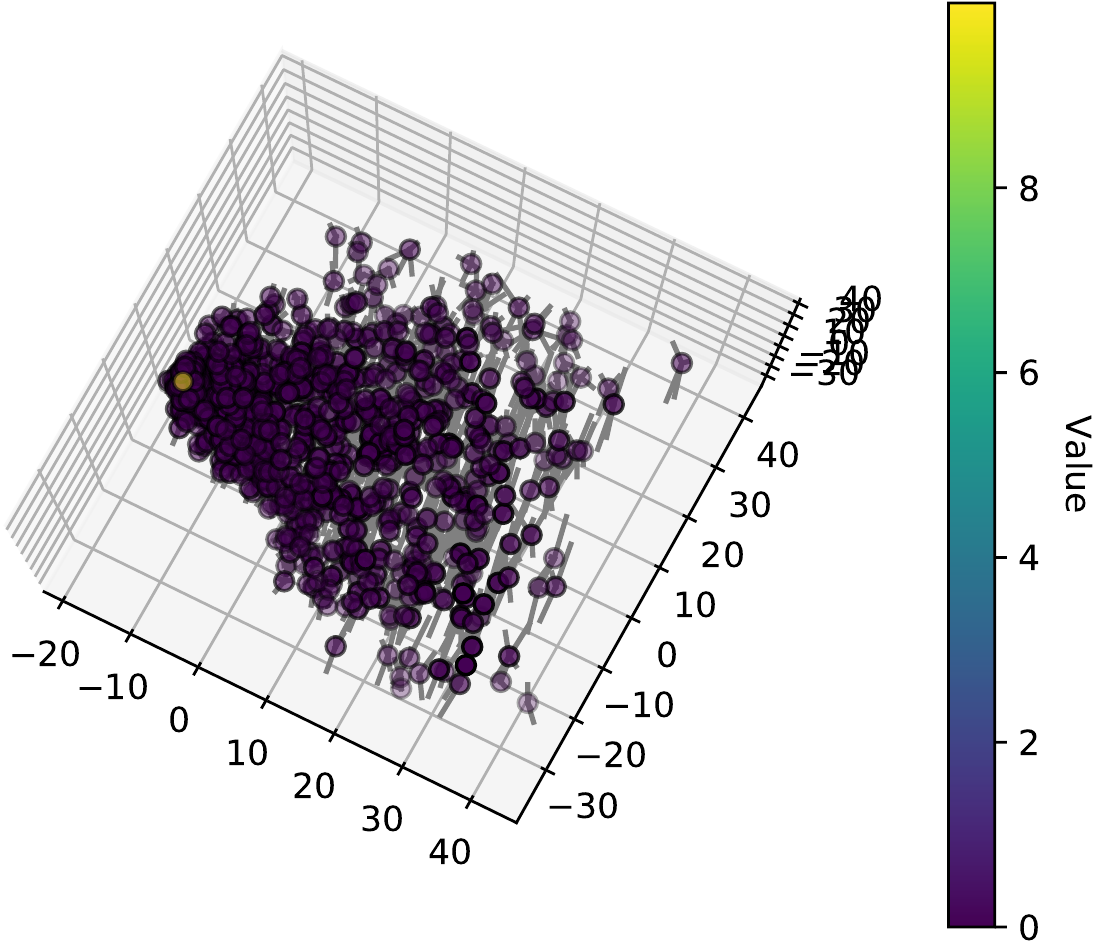}
	    \caption{WM-AE Baseline}
	    \label{fig:fmnist_wm}
	\end{subfigure}
	\begin{subfigure}{0.24\textwidth}
	    \centering
	    \includegraphics[width=\textwidth]{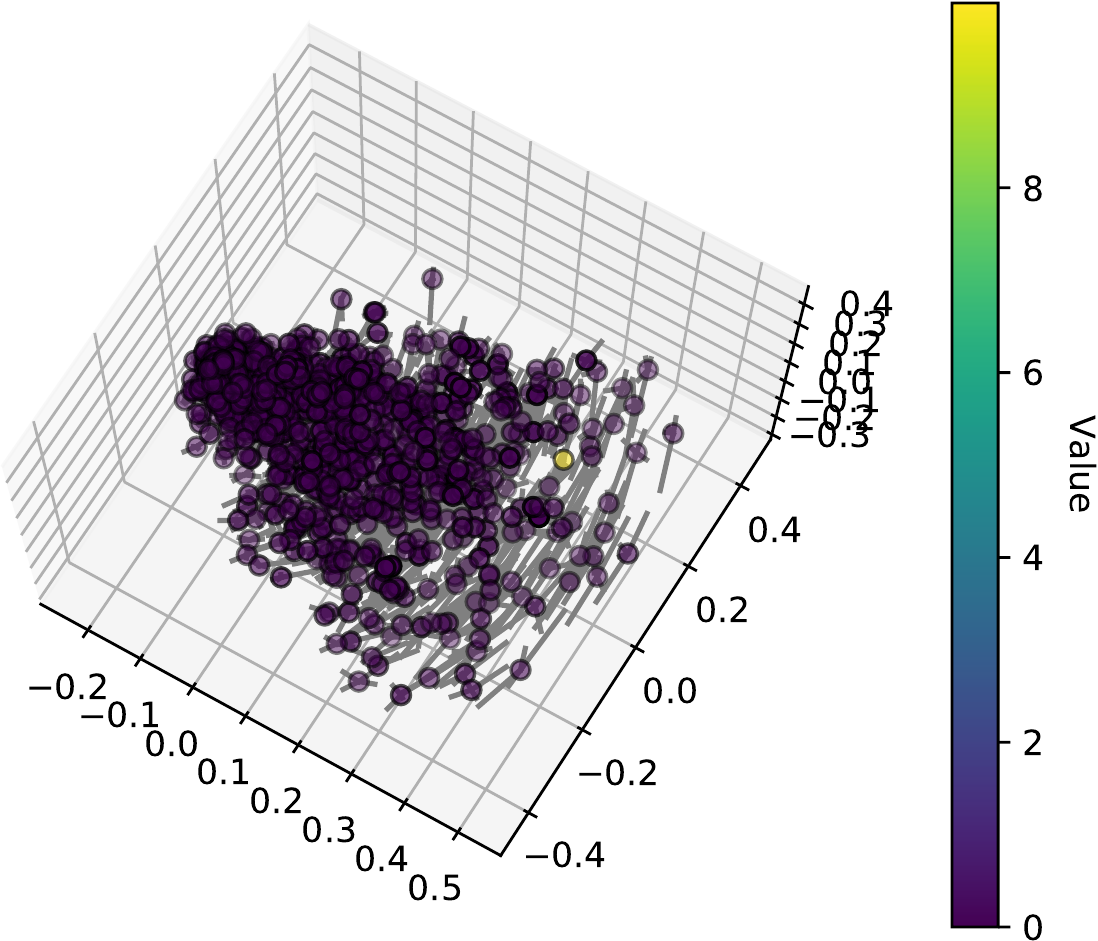}
	    \caption{LD-AE Baseline}
	    \label{fig:fmnist_recon}
	\end{subfigure}
	\begin{subfigure}{0.24\textwidth}
	    \centering
	    \includegraphics[width=\textwidth]{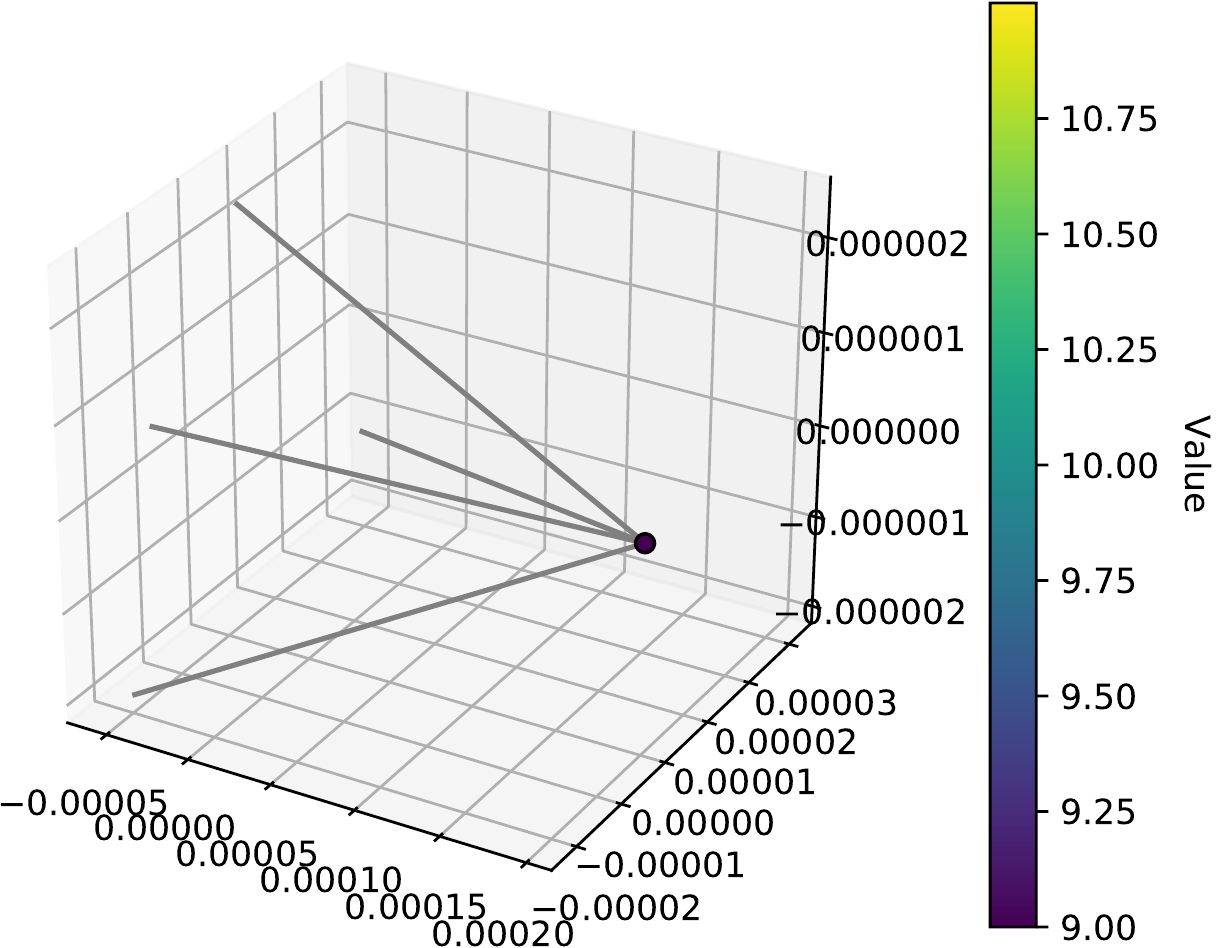}
	    \caption{DMDP-H Baseline}
	    \label{fig:fmnist_aceq0}
	\end{subfigure}
	\begin{subfigure}{0.24\textwidth}
	    \centering
	    \includegraphics[width=\textwidth]{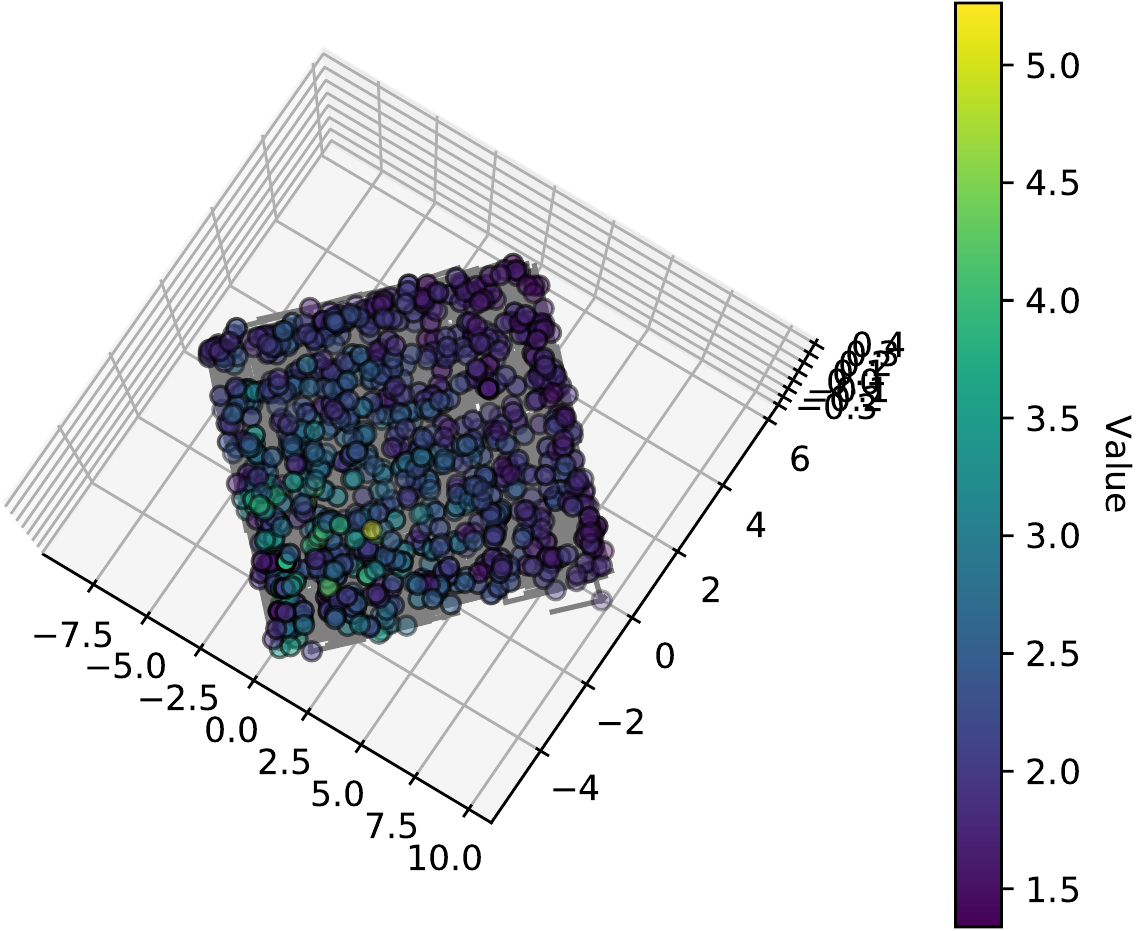}
	    \caption{This paper}
	    \label{fig:fmnist_aceq5}
	\end{subfigure}
	\caption{Abstract MDP for four approaches in planning in Fashion MNIST. Nodes are PCA projections of abstract states, edges are predicted $\atransparam$, colors are predicted values. }
	\label{fig:fmnist_latents}
\end{figure*}

\subsection{Generalizing over Goals and Objects}
In many tasks we need to be able to generalize not only over goals, but also object instances. We evaluate if our abstract state space generalizes to unseen objects in a problem class. For this we construct an object manipulation task. On each episode, an image of a piece of clothing is sampled from a set of training images in Fashion MNIST~\cite{xiao2017fashion}, and a goal translation of the image is sampled from a set of train goals (translations with negative $x$-offset: $(-3, \cdot)$ up to and including $(-1, \cdot)$). Thus, the underlying state space is a $7\times7$ grid. The translated image is provided to the agent as a goal state. The agent receives a reward of +1 if she moves the clothing to the correct translation. See Figure~\ref{fig:fmnist_overview}. At test time, we evaluate performance on test goals (translations with positive $x$-offset: $(1, \cdot)$ up to and including $(3, \cdot)$, seen before as states for training images but never as goals) and test images.
The latent spaces for each of the four representation learning approaches are shown in Figure~\ref{fig:fmnist_latents}. For DMDP-H, the latent space collapses to all but a few points. For WD-AE and LD-AE, the latent space does not exhibit clear structure. For our approach, there is a clear square grid structure present in the latent space. However, the underlying translations for the images do not neatly align across images. Clustering such states together is interesting future work.

\begin{wrapfigure}{r}{0.15\textwidth}
    \centering
    \includegraphics[width=0.15\textwidth]{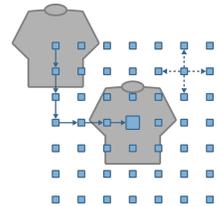}
	\caption{Transitions in the image manipulation task.}
    \label{fig:fmnist_overview}
	\vspace{-1em}
\end{wrapfigure}
Results are shown in Table~\ref{tab:fmnist}. The goal-conditioned model-free baseline has an easy time finding a good policy for the training setting. It also generalizes well to unseen images. However, it has trouble generalizing to new goal locations for both train and test images. Our planning approach, on the other hand, loses some performance on the training setting, but easily generalizes to both test images and test goals. Neither WM-AE nor LD-AE find good policies in this problem. They have a difficult time learning plannable representations because their focus is on reconstructing individual images.
\begin{table} 
\small
\begin{tabular}{lrrrr}
\toprule
	 \multicolumn{1}{c}{Avg. ep. length $\downarrow$} \\
Dataset & \multicolumn{2}{c}{\textbf{Train}} & \multicolumn{2}{c}{\textbf{Test}}\\
\emph{Goal Set} & Train & Test & Train & Test \\
\midrule
GC-Model-free & \textbf{4.82} \pmin{0.33} & 9.67 \pmin{5.01} & \textbf{4.75} \pmin{0.12} & 8.17 \pmin{2.67} \\
WM-AE & 59.95 \pmin{4.06} & 63.27 \pmin{3.36} & 64.27 \pmin{5.33} & 63.41 \pmin{2.04}  \\
LD-AE &  56.39 \pmin{7.07} & 49.35 \pmin{4.05} & 51.45 \pmin{6.79} & 51.70 \pmin{3.97}  \\
	DMDP-H ($J=0$) & 62.86 \pmin{3.87} & 66.68 \pmin{4.40} & 65.93 \pmin{4.98} & 64.86 \pmin{1.57} \\
\rowcolor{Gray}
Ours, $J=1$, & 5.07 \pmin{0.87} & \textbf{5.27} \pmin{0.56} & 5.69 \pmin{0.93} & 5.63 \pmin{0.96} \\
\rowcolor{Gray}
Ours, $J=3$ & 5.60 \pmin{0.97} & 5.46 \pmin{0.97} & 6.44 \pmin{1.12} & 5.42 \pmin{0.89} \\
\rowcolor{Gray}
Ours, $J=5$ & 5.36 \pmin{0.71} & 5.67 \pmin{1.20} & 6.36 \pmin{1.21} & \textbf{5.34} \pmin{0.93} \\
\bottomrule
\end{tabular}
	\caption{Comparing average episode length of 100 episodes for planning in Fashion MNIST. Reporting mean and standard deviation over 5 random seeds.}
\label{tab:fmnist}
\end{table}

\section{Related Work} \label{section:related}
This paper proposes a method for learning action equivariant mappings of MDPs, and using these mappings for constructing plannable abstract MDPs. We learn action equivariant maps by minimizing MDP homomorphism metrics~\cite{taylor2009bounding}. As a result, when the loss reaches zero the learned mapping is an MDP homomorphism~\cite{ravindran2004approximate}. MDP homomorphism metrics are a generalization of bisimulation metrics~\cite{ferns2004metrics, li2006towards}. Other works~\cite{higgins2018towards, cohen2016group, worrall2017harmonic} consider equivariance to symmetry group actions in learning. Here, we use a more general version of equivariance under MDP actions for learning representations of MDPs.
We learn representations of MDPs by 1) predicting the next latent state, 2) predicting the reward and 3) using negative sampling to prevent state collapse. 
Much recent work has considered self-supervised representation learning for MDPs. Certain works focus on predicting the next state using a contrastive loss~\cite{cswm, anand2019unsupervised, oord2018representation}, disregarding the reward function. However, certain states may be erronously grouped together without a reward function to distinguish them. \citet{gelada2019deepmdp} include both rewards and transitions to propose an objective based on stochastic bisimulation metrics~\cite{givan2003equivalence, ferns2004metrics, li2006towards}. However, at training time they focus on deterministically predicting the next latent state. Their proposed objective does not account for the possibility of latent space collapse, and for complex tasks they require a pixel reconstruction term. This phenomenon is also observed by~\citet{franccois2018combined}, who prevent it with two entropy maximization losses. \\
Many approaches to representation learning in MDPs depend (partially) on learning to reconstruct the input state~\cite{corneil2018efficient, watter2015embed, ha2018world, igl2018deep, hafner2018learning, thomas2017independently, kurutach2018learning, zhang2018solar, kaiser2019model, watters2019cobra, wang2019learning, asai2019unsupervised}. A disadvantage of reconstruction losses is training a decoder, which is time consuming and usually not required for decision making tasks. Additionally, such losses emphasize visually salient features over features relevant to decision making. \\
Other approaches that side-step the pixel reconstruction loss include predicting which action caused the transition between two states~\cite{agrawal2016learning}, predicting the number of time steps between two states~\cite{aytar2018playing} or predicting objects in an MDP state using supervised learning~\cite{zhang2018composable}.\\
~\citet{jonschkowski2015learning} identify a set of priors about the world and uses them to formulate self-supervised objectives. In~\citet{ghosh2018learning}, the similarity between two states is the difference in goal-conditioned policies needed to reach them from another state. ~\citet{schrittwieser2019mastering} learn representations for tree-based search that must predict among others a policy and value function, and are thus not policy-independent.
Earlier work on decoupling representation learning and planning exists~\cite{corneil2018efficient, watter2015embed, zhang2018composable}. However, these works use objectives that include a pixel reconstruction term~\cite{corneil2018efficient, watter2015embed} or require labeling of objects in states for use in supervised learning~\cite{zhang2018composable}. \\
Other work on planning algorithms in deep learning either assumes knowledge of the state graph~\cite{tamar2016value, niu2017generalized, lee2018gated, karkus2017qmdp}, builds a graph out of observed transitions~\cite{klissarov2018diffusion} or structures the neural network architecture as a planner~\cite{oh2017value, farquhar2018treeqn, franccois2018combined}, which limits the search depth.

\section{Conclusion}
This paper proposes the use of `equivariance under actions' for learning representations in deterministic MDPs. Action equivariance is enforced by the use of MDP homomorphism metrics in defining a loss function. We also propose a method of constructing plannable abstract MDPs from continuous latent spaces. We prove that for deterministic MDPs, when our objective function is zero and our method for constructing abstract MDP is used, the map we learn is an MDP homomorphism. Additionally, we show empirically that our approach is data-efficient and fast to train, and generalizes well to new goal states and instances with the same environmental dynamics. Potential future work includes an extension to stochastic MDPs and clustering states on the basis of MDP metrics. Using a clustering approach as part of model training, we can learn the prototypical states rather than sampling them. This comes at the cost of having to backpropagate through a discretization step, which in early experiments (using Gumbel-Softmax~\cite{jang2016categorical}) led to instability.

\section{Acknowledgments}
We thank Patrick Forr\'e, Jorn Peters and Michael Herman for helpful comments.
T.K. acknowledges funding by SAP SE.
\newlength{\tmplength}
\setlength{\tmplength}{\columnsep}
F.A.O. received funding from the European Research Council (ERC)
\begin{wrapfigure}{r}{0.2\columnwidth}
    \vspace{-5pt}
    \includegraphics[width=0.2\columnwidth]{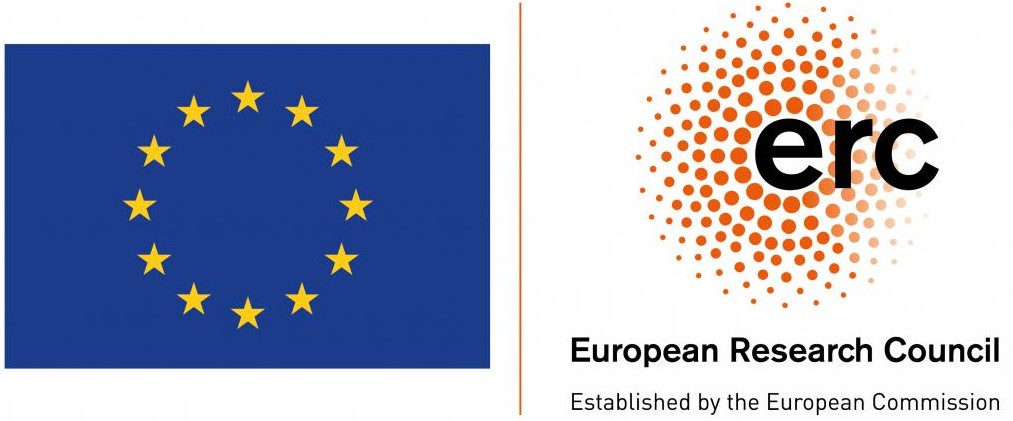}
\end{wrapfigure}
under the European Union's
Horizon 2020 research
and innovation programme (grant agreement No.~758824 \textemdash INFLUENCE).
\setlength{\columnsep}{\tmplength}


\bibliographystyle{ACM-Reference-Format}  
\bibliography{sample-bibliography}  

\appendix

\end{document}